\newtheorem{theorem}{Theorem}
\newtheorem{lemma}{Lemma}
\newtheorem{proposition}{Proposition}
\newtheorem{corollary}{Corollary}
\newcommand{\csection}[1]
{\begin{center}
\stepcounter{section}
{\bf\large\arabic{section}. #1}
\end{center}
}
\newcommand{\scsection}[1]
{\begin{center}
{\bf\large #1}
\end{center}
}
\newcommand{\csubsection}[1]{
\begin{center}
\stepcounter{subsection}
{\it\arabic{section}.\arabic{subsection}. #1}
\end{center}
}
\newcommand{\scsubsection}[1]{
\begin{center}
\stepcounter{subsection}
{\it #1}
\end{center}
}
\def\beq{\begin{equation}}
\def\eeq{\end{equation}}
\def\beqr{\begin{eqnarray}}
\def\eeqr{\end{eqnarray}}
\def\beqrs{\begin{eqnarray*}}
\def\eeqrs{\end{eqnarray*}}
\def\bet{\begin{theorem}}
\def\eet{\end{theorem}}
\def\bel{\begin{lemma}}
\def\eel{\end{lemma}}
\def\bep{\begin{proposition}}
\def\eep{\end{proposition}}
\def\bg{\begin{figure}[tbph]\begin{center}}
\def\eg{\end{center}\end{figure}}
\def\bc{\begin{center}}
\def\ec{\end{center}}
\def\epi{\varepsilon}
\def\wh{\widehat}
\def\mR{\mathbb{R}}
\def\mS{\mathcal S}
\def\mL{\mathcal L}
\def\thetap{\wh \theta_{\operatorname{stage,0}} }
\def\thetaos{\wh\theta_{\operatorname{stage,1}}}
\def\thetatwo{\wh\theta_{\operatorname{stage,2}}}
\def\thetaK{\wh\theta_{\operatorname{stage,K}}}
\def\thetaspt{\wh\theta_{\operatorname{stage,t}}}
\def\thetasptt{\wh\theta_{\operatorname{stage,t+1}}}
\def\thetaktwo{\wh\theta_{\operatorname{stage,K-2}}}
\def\thetakone{\wh\theta_{\operatorname{stage,K-1}}}
\def\thetakmone{\wh\theta_{\operatorname{stage,k-1}}}
\def\thetakm{\wh\theta_{\operatorname{stage,k}}}
\def\1{\mbox{\boldmath $1$}}
\def\mM{\mathcal M}
\def\mO{\mathcal O}
\def\cov{\mbox{cov}}
\def\argmin{\mbox{argmin}}
\newcommand{\mE}{{\mathcal E}}
\numberwithin{equation}{section}
\def\HNT{H^{(t)}}
\def\HNTT{H^{(t+1)}}
\def\st{ \wh \theta^{(t+1)} - \wh \theta^{(t)}}
\def\szero{ \thetaos - \thetap }
\def\yzero{\big\{\dot{\mL} (\thetaos) - \dot{\mL} (\thetap)\big\}}
\def\yt{\big\{\dot{\mL} (\wh \theta^{(t+1)}) - \dot{\mL} (\wh \theta^{(t)})\big\}}
\def\Vt{V^{(t)}}
\def\rhot{\rho^{(t)} }
\def\mSm{\mS_{(m)}}
\def\thetam{\wh \theta_{(m)}}
\def\hesmp{ \ddot{\mL}_{(m)}(\thetam) }
\def\hesmtrue{ \ddot{\mL}_{(m)}(\theta_0) }
\def\hestrue{ \ddot{\mL}(\theta_0) }
\def\hesint{ \ddot{\mL}(\thetap) }
\def\hestheta{ \ddot{\mL}(\theta) }
\def\hesmtheta{ \ddot{\mL}_{(m)}(\theta) }
\def\thetaglo{\wh \theta_{\operatorname{ge}} }
\def\thetat{\wh \theta^{(t)} }
\def\thetanr{\wh \theta_{\operatorname{nr,1}}}
\def\Hint{\big\{\ddot{\mL}(\thetap) \big\}^{-1}}
\def\Htrue{\big\{\ddot{\mL}(\theta_0) \big\}^{-1}}
\def\Havgtrue{ M^{-1} \sum_{m=1}^M \big\{ \ddot{\mL}_{(m)} (\theta_0) \big\}^{-1} }
\begin{document}
\begin{center}
{\bf\Large Quasi-Newton Updating for Large-Scale Distributed Learning}\\
\bigskip
Shuyuan Wu$^1$, Danyang Huang$^{2,3}$, and Hansheng Wang$^4$

{\it $^1$School of Statistics and Management, Shanghai University of Finance and Economics, Shanghai, China}\\
{\it $^2$Center for Applied Statistics, Renmin University of China, Beijing, China}\\
{\it $^3$School of Statistics, Renmin University of China, Beijing, China}\\
{\it $^4$Guanghua School of Management, Peking University, Beijing, China}
\end{center}
\begin{singlespace}
\begin{abstract}

Distributed computing is critically important for modern statistical analysis. Herein, we develop a distributed quasi-Newton (DQN) framework with excellent statistical, computation, and communication efficiency. In the DQN method, no Hessian matrix inversion or communication is needed. This considerably reduces the computation and communication complexity of the proposed method. Notably, related existing methods only analyze numerical convergence and require a diverging number of iterations to converge. However, we investigate the statistical properties of the DQN method and theoretically demonstrate that the resulting estimator is statistically efficient over a small number of iterations under mild conditions. Extensive numerical analyses demonstrate the finite sample performance.\\


\noindent {\bf KEYWORDS: } Distributed System; Quasi-Newton Methods; Communication Efficiency; Computation Efficiency; Statistical Efficiency.\\
  \begin{footnotetext}[1]
{
Correspondence: Danyang Huang, Center
for Applied Statistics, School of Statistics,
Renmin University of China, 59
Zhongguancun Street, Beijing 100872, China.
Email: dyhuang@ruc.edu.cn.
}
\end{footnotetext}

\end{abstract}
\end{singlespace}

\newpage

\csection{INTRODUCTION}

Modern statistical analysis often involves massive datasets \citep{gopal2013distributed}. In several cases, such datasets are too large to be efficiently handled by a single computer. Instead, they have to be divided and then processed on a distributed computer system, which consists of a large number of computers \citep{zhang2012communication}. Among all such computers, one often serves as the central computer, while the rest serve as worker computers. In this scenario, the central computer should be connected with all worker computers to construct a distributed computing system. Thus, approaches for the realization of efficient statistical learning on such distributed computing systems have received considerable interest from the research community \citep{mcdonald2009efficient,jordan2019communication,tang2020distributed,hector2020doubly,hector2021distributed}.

Here, we consider a standard statistical learning problem with a total of $N$ observations, where $N$ is assumed to be very large. For each observation $i$, we collect a response variable $Y_i\in\mR$ and corresponding feature vector $X_i\in\mR^p$. The objective is to accurately estimate an unknown parameter $\theta_0\in\mR^p$ by minimizing an appropriately defined empirical loss function (e.g., negative log-likelihood function), denoted by $\mL( \theta) = \sum_{i=1}^N \ell (X_i,Y_i;\theta) $, where $\ell (X_i,Y_i;\theta)$ is the loss function defined on the $i$-th sample.
Under a traditional setup with a small sample size $N$, this optimization problem can be easily solved using, for example, the standard Newton--Raphson algorithm. Specifically, let $\wh \theta^{(0)}$ be an appropriate initial estimator of $\theta_0$. Next, let $\wh \theta^{(t)}$ be the estimator obtained in the $t$-th iteration. Then, the ($t$+1)-th step estimator can be obtained as follows:
\beq
\label{eq:Newton-Raphson}
\wh \theta^{(t+1)} = \wh \theta^{(t)} - \alpha_t \big\{\ddot{\mL}\big(\wh \theta^{(t)} \big)\big\}^{-1} \dot{\mL}\big(\wh \theta^{(t)} \big) ,
\eeq
where $\dot{\mL}( \theta )$ and $\ddot{\mL}( \theta ) $ represent the first- and second-order derivatives of the loss function $\mL(\cdot)$ with respect to $ \theta$, respectively, and $\alpha_t$ represents the learning rate. Here, we assume that the initial estimator is close to $\theta_0$, and thus, we set $\alpha_t=1$ \citep{mokhtari2018iqn}. However, for a massive dataset that is distributed on a distributed computing system, efficient execution of the above Newton--Raphson algorithm becomes a non-trivial problem.

One straightforward solution is to retain the original Newton-Raphson algorithm but with distributed computing.
Specifically, we assume that there exist $M$ workers indexed by $1\leq m\leq M$. We denote the entire sample by $\mS_F=\{1,2,...,N\}$ and the sample allocated to the $m$-th worker by $\mS_{(m)}\subset\mS_F$. Then, we have $\cup_{m=1}^M \mS_{(m)}=\mS_F$ and $\mS_{(m_1)}\cap \mS_{(m_2)}=\emptyset$ for any ${m_1}\not ={m_2}$. Given $\wh \theta^{(t)}$, we can then compute the first- and second-order derivatives of the loss function as follows:
\beq
\label{eq:D1D2}
\dot{\mL}\big(\wh \theta^{(t)} \big) = M^{-1} \sum_{m=1}^M \dot{\mL}_{(m)}\big(\wh \theta^{(t)} \big) \quad \text{ and } \quad
\ddot{\mL}\big(\wh \theta^{(t)} \big) = M^{-1} \sum_{m=1}^M \ddot{\mL}_{(m)}\big(\wh \theta^{(t)} \big), \nonumber
\eeq
where $\dot{\mL}_{(m)}\big(\wh \theta^{(t)} \big) = \sum_{i \in \mS_{(m)}} \dot{\ell}( X_i,Y_i; \wh \theta^{(t)} ) $ and $\ddot{\mL}_{(m)}\big(\wh \theta^{(t)} \big) = \sum_{i \in \mS_{(m)}} \ddot{\ell} ( X_i,Y_i; \wh \theta^{(t)} )$. $\ \ $ $\dot{\ell} ( X_i,Y_i; \wh \theta^{(t)} ) $ and $\ddot{\ell} ( X_i,Y_i; \wh \theta^{(t)} ) $ are computed on the $m$-th worker
and are transferred to the central computer for updating $\wh \theta^{(t+1)}$, according to (\ref{eq:Newton-Raphson}). The solution is easy to implement and useful in practical applications but has several serious limitations. First, inverting the $p\times p$-dimensional Hessian matrix using the central computer incurs a computation cost on the order of $O(p^3)$ for each iteration. Second, transferring the local Hessian matrices from each worker to the central computer incurs a communication cost of order $O(p^2)$ for each worker in each iteration. Thus, this approach could incur high
computation and communication costs for high-dimensional data \citep{fan2019distributed}.

Consequently, various communication-efficient Newton-type methods have been proposed to alleviate high communication costs. The underlying key idea is to maximally reduce the number of iterations required to transfer the Hessian matrix. For example, various one-step estimators have been proposed \citep{huang2019distributed,wangfei2020efficient,zhu2021least}. For these methods, only one round of Hessian matrix communication is needed. The resulting estimator can be statistically as efficient as the global one under appropriate regularity conditions.
Methods avoiding Hessian matrix transmission have also been developed \citep{shamir2014communication,zhang2015disco,wang2018giant,crane2019dingo,jordan2019communication}. The underlying key idea is to approximate the entire sample Hessian matrix using an appropriate local estimator, which is computed on a single computer (e.g., the central computer). Consequently, the communication cost resulting from Hessian transmission can be avoided. The inspiration for most statistical research on these methods is to obtain an estimator with statistical efficiency comparable to that of the global one within a small number of iterations. In this manner, the communication cost could be significantly reduced.

Nevertheless, the computation cost for calculating the Hessian inverse matrix is still of order $O(p^3)$.
On one hand, to avoid matrix inverse calculation, distributed gradient descent algorithms have been developed \citep{goyal2017accurate,lin2018distributed,qu2019accelerated,su2019securing}, which require only first-order derivatives of the loss function (i.e., gradients). However, a large number of iterations are typically required for convergence, and the choice of hyperparameters is cumbersome. On the other hand, quasi-Newton methods in a distributed manner have been developed to address this problem \citep{chen2014bfgs,Eisen2017,Lee2018,Soori2020dqn}. The key idea behind quasi-Newton methods is to approximate the Hessian inverse in each iteration without actually inverting the matrix \citep{davidon1991variable,goldfarb1970family}.

Specifically, for quasi-Newton methods, given an approximately inverted Hessian
matrix in the $t$-th iteration $\HNT$, we could obtain $\HNTT$ by solving a linear equation, which is referred to as a secant condition \citep{davidon1991variable,goldfarb1970family}:
\beq
\label{eq:secant-condi}
\HNTT \big\{ \dot{\mL} (\wh \theta^{(t+1)}) - \dot{\mL} (\wh \theta^{(t)}) \big\} = ( \wh \theta^{(t+1)} - \wh \theta^{(t)}).
\eeq Unfortunately, the secant condition cannot uniquely determine $\HNTT$.
Two classical solutions have been proposed to solve this problem. The first is {\it symmetric rank one update} \citep[SR1]{davidon1991variable}. The second solution is referred to as {\it symmetric rank two update} \citep[SR2]{goldfarb1970family}, which is also called Broyden--Fletcher--Goldfarb--Shanno (BFGS) update.
The distributed SR1 \citep{Soori2020dqn} and BFGS \citep{chen2014bfgs,Eisen2017} methods are correspondingly designed. The communication cost of these types of methods could have orders as low as $O(p)$ in each iteration. However, multiple rounds of communication are still required. Moreover, most of the existing studies discuss the numerical convergence of distributed quasi-Newton methods; however, discussions on statistical properties are limited.

To address this, we develop a novel distributed quasi-Newton (DQN) learning method that focuses on the statistical efficiency. With the help of a statistical discussion, we demonstrate that the proposed DQN algorithm requires only a small number of communication iterations to produce an estimator that is statistically as efficient as the global one. As a consequence, the proposed estimator is both communicationally and computationally efficient.
Specifically, estimators and approximated Hessian inverses are first locally computed on each worker computer. Then, a communication mechanism is designed so that each worker passes the local Hessian information to the central computer but only in the form of a $p$-dimensional vector. In each iteration, the communication cost is of order $O(p)$, which is the same as that reported in most existing DQN-related studies \citep{chen2014bfgs,Eisen2017,Lee2018,mokhtari2018iqn,Soori2020dqn}.
However, the proposed DQN method requires only a finite number of iterations with statistical guarantees. To be more precise, under the mild condition, i.e., $Np^{2K} (\log p)^{K+1} / n^{2K+2} \to 0$, only $3K$ rounds of iterations are required, where $K$ is a small finite integer. Consequently, the overall costs attributed to communication and computation are statistically guaranteed. By contrast, a diverging number of iterations is required by methods presented in the existing literature.

The remainder of this paper is organized as follows. In Section 2, we present the DQN methodology and theoretical properties. Numerical studies, including simulation experiments and real data analysis, are presented in Section 3. Section 4 concludes the article with a brief discussion. All technical details are delegated to the Appendixes.

\csection{METHODOLOGY}

\csubsection{Quasi-Newton Algorithm}

We first introduce some notations for model definition. We consider a standard master-and-worker type distributed computation system with one central computer and $M$ worker computers. Let us recall that $\mSm$ is the index set of the sample distributed to the $m$-th worker. For convenience, we assume that $|\mSm| = n$ for every $1 \leq m \leq M$. Then, we have $N = nM$. Moreover, we recall that the global loss function is given by $\mL(\theta) = N^{-1} \sum_{i=1}^N \ell (X_i,Y_i;\theta)$. We define $\thetaglo= \argmin_{\theta} \mL(\theta)$ and $\theta_0 = \argmin_\theta E\big\{ \ell (X_i,Y_i;\theta)\big\}$ as the global estimator and true parameter, respectively. Under appropriate regularity conditions \citep{shao1999mathematical}, we have
$
\sqrt N (\thetaglo - \theta_0) \to_d N(0,\Sigma)
$
for some positive definite matrix $\Sigma \in \mR^{p \times p}$ as $N \to \infty$. For example, $\mL(\theta)$ can be defined as twice the negative log-likelihood function. Accordingly, $\thetaglo$ becomes the maximum likelihood estimator (MLE).
Subsequently, we define the local loss function on the $m$-th worker computer as $\mL_{(m)} (\theta) = n^{-1} \sum_{i \in \mS_m} \ell(X_i,Y_i;\theta)$. Let $\thetam = \argmin_\theta \mL_{(m)} (\theta)$ be the estimator locally obtained on the $m$-th worker computer. Further, $\dot{\ell}(X_i,Y_i;\theta) = \partial \ell(X_i,Y_i;\theta) /\partial \theta \in \mR^p, \ddot{\ell}(X_i,Y_i;\theta) = \partial \ell(X_i,Y_i;\theta) / \partial \theta \theta^\top \in \mR^{p \times p}$, and $\dddot{\ell}(X_i,Y_i;\theta) = \partial \operatorname{vec}\big\{ \ddot{\ell}(X_i,Y_i;\theta) \big\} /\partial \theta \in \mR^{p \times p^2}$ denote the first-, second-, and third-order derivatives of $\theta$, respectively. Finally, for any matrix $B \in \mR^{p \times q}$, $\|B\|_2$ is the maximum singular value of $B$. If $B$ is a symmetric matrix, then $\lambda_{\min}(B)$ and $\lambda_{\max}(B)$ represent its minimal and maximal eigenvalues, respectively.

Before presenting the new method, we briefly introduce quasi-Newton methods.
The well-known quasi-Newton methods were developed to address the problem of computation cost \citep{davidon1991variable,goldfarb1970family}. The key idea is to approximate the Hessian inverse without inverting the Hessian matrix. Let $H^{(t)} \in \mR^{p\times p}$ be the approximately inverted Hessian matrix used in the $t$-th iteration. For the standard Newton--Raphson algorithm, we have $\HNT = \{ \ddot{\mL}( \wh \theta^{(t)}) \}^{-1}$. However, for the quasi-Newton algorithm, this is defined in a different but smart manner. Specifically, note that
$
\dot{\mL } (\wh \theta^{(t+1)}) - \dot{\mL } (\wh \theta^{(t)}) \approx
\ddot{\mL} (\wh \theta^{(t)}) (\wh \theta ^{(t+1)} - \wh \theta^{(t)} )
$ based on Taylor's expansion.
This suggests that given $\HNT$, we could obtain $\HNTT$ by solving the secant condition  (\ref{eq:secant-condi}).
As mentioned previously, the secant condition cannot uniquely determine $\HNTT$.
For SR1 update \citep{davidon1991variable}, given $\HNT$, $H^{(t+1)}$ is updated based on the rank one correction of $\HNT$, i.e., $H^{(t+1)} = H^{(t)} + \alpha u u^\top$ for some undetermined coefficient $\alpha \in \mR$ and $u \in \mR^p$. Accordingly, solving $\alpha$ and $u$ using (\ref{eq:secant-condi}), we obtain
\beq
\label{eq:rankone}
\HNTT = \HNT + \frac{ v^{(t)} \big\{ v^{(t)}\big\}^\top }{ \big\{v^{(t)}\big\}^\top \yt } ,
\eeq
where $ v^{(t)} = \st - \HNT \yt $.
For SR2 update \citep{goldfarb1970family}, given $\HNT$, $H^{(t+1)}$ is updated according to
\beq
\label{eq:BFGS}
\HNTT = \big(\Vt\big)^\top \HNT \Vt + \rhot\Big(\st\Big) \Big(\st \Big)^\top ,
\eeq
where $\rhot = 1/\big[ \big(\st\big)^\top \yt \big]$, $\Vt = I_p - \rhot \yt \big(\st\big) ^\top$, and $I_p\in\mR^p$ is an identity matrix. Equation (\ref{eq:BFGS}) is the well-known BFGS formula \citep{goldfarb1970family}. For convex loss functions, (\ref{eq:BFGS}) guarantees the positive definiteness of $\HNTT$ if $\HNT$ is positive definite \citep{nocedal1999numerical}.
Derivation details for obtaining (\ref{eq:rankone}) and (\ref{eq:BFGS}) are described in Appendix D.

Comparing (\ref{eq:rankone}) and (\ref{eq:BFGS}) with $\{ \ddot{\mL}( \wh \theta^{(t)}) \}^{-1}$, we find that no Hessian matrix inversion is needed for
computing $\HNTT$ using the SR1 or BFGS algorithm if the previous update $\HNT$ is available. Thus, both algorithms offer highly efficient computation. After computing $\HNT$, $\thetat$ can be updated as $\wh \theta^{(t+1)} = \thetat - \HNT \dot{\mL}(\wh \theta^{(t)})$. As proved in \cite{broyden1973local}, the resulting estimator converges Q-superlinearly to the global estimator $\thetaglo$; i.e.,
$
\|\wh \theta^{(t+1)} - \thetaglo \| / \|\thetat - \thetaglo\| \to 0
$ as $t \to \infty$ for a strongly convex loss function. This convergence rate is slightly lower than the quadratic rate of the classical
Newton--Raphson algorithm. However, it is much faster than the linear
rate of various gradient-based methods. This makes the quasi-Newton algorithm one of the most popular algorithms in practice \citep{nocedal1999numerical}.

\csubsection{Distributed One-Stage Quasi-Newton Estimator}

To avoid multiple rounds of iterations, we consider the distributed one-stage quasi-Newton estimator. We were motivated to do so for two reasons. First, as mentioned previously, the quasi-Newton method is computationally efficient, because no Hessian matrix inversion is involved. This makes it particularly attractive for high-dimensional data analysis. Second, with an interesting modification, we find that the quasi-Newton algorithm can operate with a master-and-worker type distributed computing system in a very natural and comfortable manner. The resulting communication cost is also minimal, that is, of order $O(p)$. Specifically, we present a communication and computation efficient distributed one-stage quasi-Newton algorithm, which can be executed using the following three steps.

{\sc Step 1.} At the start, we assume that an initial estimator can be provided for the central computer. The initial estimator should be convenient to obtain. Moreover, it needs to be consistent, but excellent statistical efficiency is not necessary.
Here, we consider the popularly used one-shot estimator \citep{zhang2013communication} as the initial estimator. Accordingly, we need each client to report a local estimator $\thetam$ determined by the quasi-Newton algorithm to the central computer. Next, a global estimator can be simply assembled as $ \thetap = M^{-1} \sum_{m=1}^M \thetam$. Once the initial estimator $\thetap$ is obtained, it is then broadcasted to every worker computer; see the left panel of Figure \ref{fig:1}. This completes the first round of communication with $O(p)$ cost.
\begin{figure}[h]
  \centering
  \includegraphics[width=0.95\textwidth]{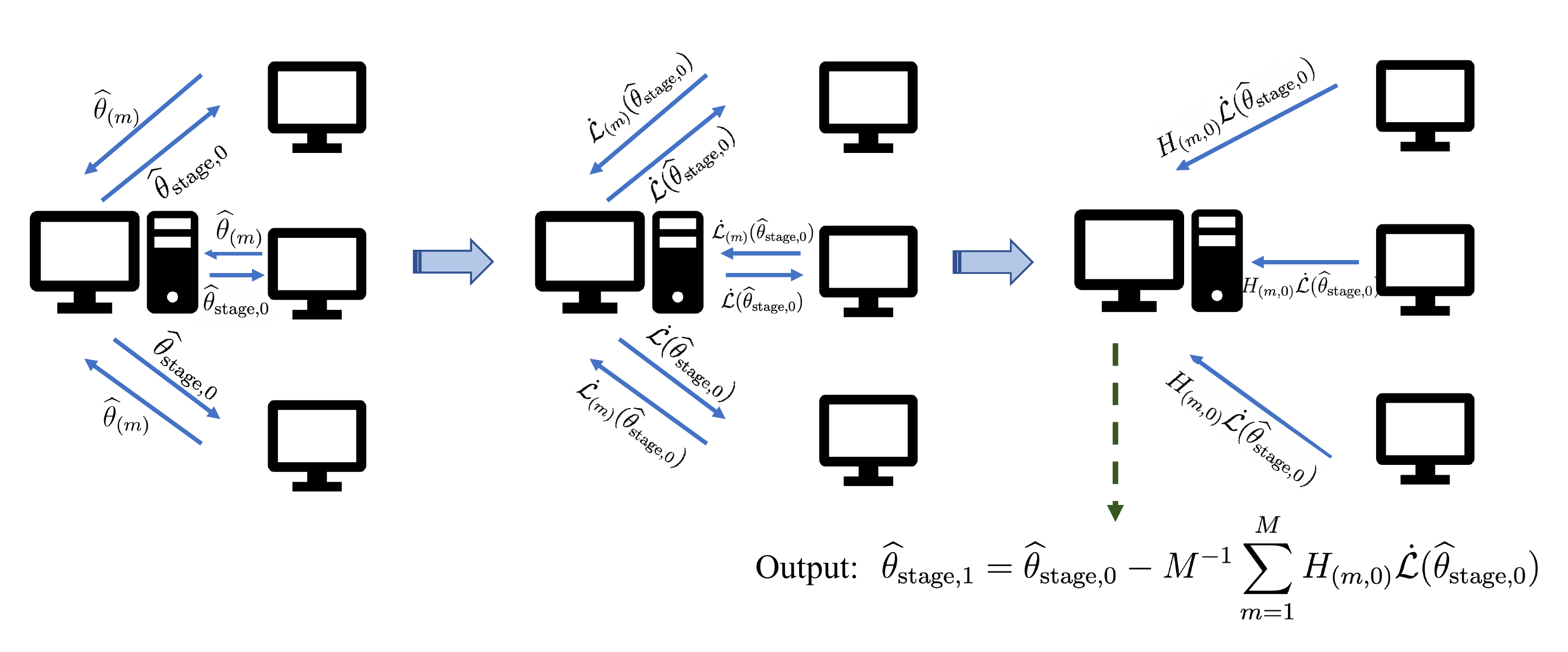}
  \caption{Illustration of the communication-efficient one-stage method.}
  \label{fig:1}
  \end{figure}

{\sc Step 2.} After receiving $\thetap$ from the central computer, each worker computer can compute the local gradients $\dot{\mL}_{(m)} (\thetap)$. These are then transferred back to the central computer to determine the global gradient $\dot{\mL}(\thetap) = M^{-1} \sum_{m=1}^M $ $\dot{\mL}_{(m)} (\thetap)$. Thereafter, the global gradient $\dot{\mL}(\thetap)$ should be broadcasted back to each worker computer. The middle panel of Figure \ref{fig:1} presents an illustration of this second step. This completes the second round of communication with $O(p)$ cost.

{\sc Step 3.} When deriving $\thetam$ in {\sc Step 1,} the approximated Hessian inverse $H_{(m,0)}$ is also obtained as a byproduct of the quasi-Newton algorithm. We apply $H_{(m,0)}$ to the global gradient vector $\dot{\mL}(\thetap)$ to obtain a $p$-dimensional vector $H_{(m,0)} \dot{\mL}(\thetap)$, which is then reported back to the central computer. This completes the third round of communication with $O(p)$ cost. Recall that the central computer also holds the initial estimator $\thetap$. Then, a new estimator can be obtained as follows:
\beq
\label{eq:one-step}
\thetaos = \thetap - M^{-1} \sum_{m=1}^M H_{(m,0)} \dot{\mL}(\thetap). \nonumber
\eeq
The right panel of Figure \ref{fig:1} presents an illustration of the last step. For convenience, we refer to $\thetaos $ as the one-stage DQN estimator (DQN(1)).

\begin{algorithm}
\caption{Distributed One-Stage Quasi-Newton Algorithm}\label{alg:one-step}
\KwIn{Initial estimators $\wh \theta^{(0)}_{(m)}, H^{(0)}_{(m,0)}$ on the $m$-th worker, tolerance $\delta>0$, maximum iterations $T$\;}
\KwOut{One-stage estimator $ \thetaos$\;}
\For{$m=1,2,\dots,M$ (distributedly)}{
\While{ $\operatorname{tol} > \delta$ and $t < T$ }{
$\wh \theta^{(t+1)}_{(m)} = \wh \theta^{(t)}_{(m)} - H^{(t)}_{(m,0)} \dot{\mL}_{(m)}\big(\wh\theta^{(t)}_{(m)}\big)$, where $H^{(t)}_{(m,0)}$ is updated by (\ref{eq:rankone}) or (\ref{eq:BFGS})\;
}
Save $H_{(m,0)}^{(t)}$ and $\wh \theta^{(t)}_{(m)}$ at convergence as $H_{(m,0)}$ and $\thetam$ and then transfer $\thetam$ to the central computer\;
}
The central computer computes $\thetap = M^{-1} \sum_{m=1}^M \wh \theta_{(m)}$ and broadcasts $\thetap$ to each worker\;
\For{$m=1,2,\dots,M$ (distributedly)}{
Compute $\dot{\mL}_{(m)}(\thetap) $ and
transfer it to the central computer\;
}
The central computer computes $\dot{\mL}(\thetap)= M^{-1} \sum_{m=1}^M \dot{\mL}_{(m)}(\thetap)$ and broadcasts $\dot{\mL}(\thetap)$ to each worker\;
\For{$m=1,2,\dots,M$ (distributedly)}{
Calculate $H_{(m,0)} \dot{\mL} (\thetap)$ and
transfer it to the central computer\;
}
The central computer computes
$
\thetaos = \thetap - M^{-1} \sum_{m=1}^M H_{(m,0)} \dot{\mL} (\thetap).
$

\end{algorithm}

To summarize, three rounds of master-and-worker communication are needed to compute $\thetaos$. Because the communication cost for each round is of order $O(p)$, the total communication cost is also of the same order, which is the lowest communication complexity possible for a $p$-dimensional distributed parameter estimation problem. A more detailed description of the algorithm is given in Algorithm \ref{alg:one-step}. Note that $\thetaos$ shares a similar spirit as the classical one-step estimator for MLE \citep{van2000asymptotic}. However, $\thetaos$ is mainly designed for a distributed system with minimal communication and computation costs.

\csubsection{Theoretical Properties}

We next study the theoretical properties of the proposed DQN(1) estimator.
To this end, several regularity conditions must be considered.

\begin{itemize}
\item [(C1) ] ({\sc Randomness}) Assume that ($X_i,Y_i$)s on the $m$-th worker are independently and identically distributed.

\item [(C2)] ({\sc Parameters})
The parameter space $\mathbf{\Theta} $ is a compact and convex subset of $\mR^p$. In addition, $\theta_0 \in \operatorname{int} ( \mathbf{\Theta})$ and $R:= \sup_{\theta \in \mathbf{\Theta}} \| \theta- \theta_0 \| > 0$.

\item [(C3)] ({\sc Local Strong Convexity})
Define $\Omega(\theta) = E\Big[ \dot{\ell}(X_i,Y_i;\theta) \big\{\dot{\ell}(X_i,Y_i;\theta) \big\}^\top \Big] =-  E\big\{ \ddot{\ell}(X_i,Y_i;\theta) \big\}$. Assume $ \tau_{\min} \leq \lambda_{\min}\big\{\Omega(\theta_0) \big\} \leq \lambda_{\max}\big\{\Omega(\theta_0) \big\} \leq\tau_{\max} $ for some positive constants $\tau_{\min}$ and $\tau_{\max}$.

\item [(C4)] ({\sc Smoothness}) Define $B(\theta_0, \delta) = \big\{ \theta \in \mR^p \big| \|\theta - \theta_0 \| \leq \delta \big\}$ to be a ball around $\theta_0$ with radius $\delta > 0$. Assume that
there exist two constants $C_G>0$ and $C_H>0$ such that the following inequalities hold.
$$
E\bigg\{ \Big\| \dot{\ell}(X_i,Y_i;\theta) \Big\|_2^8 \bigg\} \! \leq \! C_G^8, E\bigg\{\Big\| \ddot{\ell}(X_i,Y_i;\theta) - \Omega(\theta) \Big\|_2^8 \bigg\} \! \leq \! C_H^8 \text{ for all } \theta \!\in\! B(\theta_0,\delta).
$$
Moreover, for all $X \in \mR^p, Y\in \mR$, $\ddot{\ell}(X_i,Y_i;\theta) $ and $\dddot{\ell}(X_i,Y_i;\theta) $ are both Lipschitz continuous, in the sense that for any $\theta^\prime, \theta^{\prime \prime} \in B(\theta_0,\delta)$ and $u \in \mR^p$,
\begin{gather}
\Big\|
\ddot{\ell}(X_i,Y_i;\theta^{\prime}) - \ddot{\ell}(X_i,Y_i;\theta^{\prime\prime} ) \Big\|_2 \leq C(X_i,Y_i) \big\| \theta^{\prime} - \theta^{\prime\prime} \big\| \text{ and } \nonumber\\
\Big\|
\big\{ \dddot{\ell}(X_i,Y_i;\theta^{\prime}) - \dddot{\ell}(X_i,Y_i;\theta^{\prime\prime} )\big\} ( u \otimes I_p) \Big\|_2 \leq C(X_i,Y_i) \big\| \theta^{\prime} - \theta^{\prime\prime} \big\| \|u\|, \nonumber
\end{gather}
and $E\big\{ C^8(X_i,Y_i) \big\} \leq C_{\max}^8 $, $E\big[ C^8(X_i,Y_i) - E\big\{ C(X_i,Y_i) \big\}^8  \big] \leq C_{\max}^8$ for some positive constant $C_{\max}$.
\item [(C5)]  ({\sc Convergence})
For the $m$-th worker, define the $t$-th step local approximate Hessian inverse to be $H_{(m,0)}^{(t)}$, assume that
$ \lim_{t \to \infty} \big\|H_{(m,0)}^{(t)} - \big\{\ddot{\mL}(\thetam)\big\}^{-1} \big\|_2 \to 0$.
\item [(C6)]  ({\sc Dimensionality}) We assume that $p \sqrt{\log p} /n \to 0$ as $n \to \infty$.
\end{itemize}
Condition (C1) requires that the data be randomly distributed across different computers to ensure the statistical consistency of the one-shot estimator as a convenient initial estimator. The same condition was adopted in \cite{zhang2013communication} and \cite{fan2019distributed}. (C2)--(C4) are classical regularity conditions in convex optimization \citep{zhang2013communication,jordan2019communication}. (C5) guarantees the convergence of the approximation matrix $H_{(m,0)}$ and has previously been rigorously investigated. Specifically, for SR1 update, (C5) has been rigorously proved by \cite{conn1991convergence}, assuming that sequence $\big\{H_{(m,0)}^{(t)} \dot{\mL}(\wh \theta^{(t)})\big\}$ is uniformly linearly independent. (C5) has also been verified for
BFGS update by  \cite{schuller1974order}  under slightly stronger conditions.
(C6) specifies the relationship between the dimension $p$ and local data size $n$.
Given these technical conditions, we establish Theorem \ref{thm:one-step}.

\bet
\label{thm:one-step}
Assume that (C1)--(C6) hold. Then, we have $\big\| \thetaos - \thetaglo \big\| \leq \kappa  \Big(  M^{-1} $   $ \sum_{m=1}^M \Big[ \|\thetam - \theta_0\|^2 +  \| \hesmtrue - \Omega(\theta_0) \|_2^2 + \big\| \big\{ \dddot{\mL}_{(m)}(\theta_0) - \dot{\Omega}(\theta_0) \big\} $ $\big\{ (\thetam - \theta_0) \otimes I_p \big\} \big\|_2 \Big] + \| \thetap - \theta_0 \|  \Big)  \big \| \thetap - \theta_0 \big\|$ for some constant $\kappa>0$ with probability tending to one. Further, assuming that $N (p\log p)^2/n^4 \to 0$, we have $\|\thetaos - \thetaglo \| = o_p(N^{-1/2})$.
\eet
\noindent
The detailed proof is given in Appendix A.1. From Theorem \ref{thm:one-step}, we infer that the discrepancy between $\thetaos$ and $\thetaglo$ is upper bounded by $\Big(  M^{-1} \sum_{m=1}^M \Big[ \|\thetam - \theta_0\|^2 +  \| \hesmtrue - \Omega(\theta_0) \|_2^2 + \big\| \big\{ \dddot{\mL}_{(m)}(\theta_0) - \dot{\Omega}(\theta_0) \big\}   \big\{ (\thetam - \theta_0) \otimes I_p \big\} \big\|_2 \Big] + \| \thetap - \theta_0 \|  \Big) \big \| \thetap - \theta_0 \big\|$, =$O_p( p \log p /n^2) + o_p(1/\sqrt{N})$. Thus, the difference $\|\thetaos-\thetaglo\|$ is further reduced compared with $\|\thetap - \theta_0 \|$ to order $O_p(1/\sqrt{N} + \sqrt{\log p}/n)$; see details in equation (\ref{apa:one-stage-order}) of Appendix A. The amount of compression is determined by three factors: (1) averaged distance of the local estimator $M^{-1} \sum_{m=1}^M \|\thetam - \theta_0\|$; (2) averaged distance of the local estimator, Hessian matrix, and third derivative matrix $M^{-1} \sum_{m=1}^M \Big[ \|\hesmtheta - \Omega(\theta_0)\|_2 + \big\| \big\{ \dddot{\mL}_{(m)}(\theta_0) - \dot{\Omega}(\theta_0) \big\} \big\{ (\thetam - \theta_0) \otimes I_p \big\}\big\|_2\Big]$; and (3) distance between the initial estimator and true parameter $\|\thetap - \theta_0\|$.
Accordingly, assuming $N (p\log p)^2/n^4 \to 0$,
$\thetaos$ achieves the optimal statistical efficiency. When $p$ is fixed, this condition reduces to $N /n^4 \to 0$. It is a condition much weaker than $N/n^2 \to 0$, which has been typically assumed in the existing literature \citep{zhang2013communication,wangfei2020efficient}.


\csubsection{Distributed Multi-Stage Quasi-Newton Estimator}

In the previous section, we introduced the DQN(1) estimator.
Note that to achieve the optimal statistical efficiency, we require $N(\log p)^4/n^4 \to 0$. This condition can be easily satisfied if the feature dimension $p$ is not too high. By contrast, if $p$ is relatively high, the convergence rate of the DQN(1) estimator slows down. To fix this, we further develop a multi-stage DQN estimator. First, we present a two-stage DQN estimator with two extra updating steps with BFGS update and refer to it as the DQN(2) estimator. The details of the DQN(2) algorithm are given below. It is remarkable that, after the first three steps, the DQN(1) estimator $\thetaos$ is already computed by the central computer.

{\sc Step 4.} Broadcasting the DQN(1) estimator to each worker computer. Similar to the DQN(1) algorithm, the worker computer should compute the local gradient $\dot{\mL}_{(m)}(\thetaos)$, which should be reported back to the central computer. As a consequence, the global gradient $\dot{\mL}(\thetaos)$ can be assembled. This leads to two rounds of communication with a cost of order $O(p)$.

{\sc Step 5.} Note that, when we compute the DQN(1) algorithm, each worker holds an approximated Hessian inverse matrix $H_{(m,0)}$. Moreover, note that $\thetap$ and $\dot{\mL}(\thetap)$ are the estimators obtained in the process of the DQN(1) algorithm for each worker. Consequently, given $H_{(m,0)}$, each worker could compute the updated matrix $H_{(m,1)}$ according to the BFGS formula (\ref{eq:BFGS}) as follows:
\beq
\label{eq:two-step-update}
H_{(m,1)} = \big\{ V_0 \big\}^\top H_{(m,0)} \big\{ V_0 \big\} + \rho_0 \Big( \thetaos - \thetap \Big) \Big( \thetaos - \thetap \Big)^\top,
\eeq
where $V_0 = I_p - \rho_0 \yzero \big(\szero\big) ^\top$ and $\rho_0 = 1/\big[\big(\szero\big) ^\top \yzero \big]$. After computing $H_{(m,1)} $, it is applied to the global gradient $\dot{\mL}(\thetaos)$. This leads to a $p$-dimensional vector $H_{(m,1)} \dot{\mL}(\thetaos)$, which is then reported back to the central computer. Subsequently, the DQN(2) estimator could be derived as
\beq
\label{eq:bfgs-two-step}
\thetatwo = \thetaos - M^{-1} \sum_{m=1}^M H_{(m,1)} \dot{\mL} \Big(\thetaos\Big).
\eeq
Thus, Steps 4 and 5 constitute the second stage estimation. The detailed algorithm is given in Algorithm \ref{alg:two-step}. Moreover, a two-stage DQN estimator with the SR1 updating strategy could be similarly obtained; more details are presented in Appendix C.1.

\begin{algorithm}
\caption{Distributed Two-Stage Quasi-Newton Algorithm}\label{alg:two-step}
\KwIn{DQN(1) estimator $\thetaos$ on the central computer, $\thetap$, $ \dot{\mL} (\thetap)$, and the initial Hessian inverse approximation $H_{(m,0)}$ on the $m$-th worker\; }
\KwOut{DQN(2) estimator $\thetatwo$\;}
The central computer broadcasts $\thetaos$ to each worker\;
\For{$m=1,2,\dots,M$ (distributedly)}{
Compute $\dot{\mL}_{(m)}(\thetaos) $ and
transfer it to the central computer\;
}
The central computer computes $\dot{\mL}(\thetaos)= M^{-1} \sum_{m=1}^M \dot{\mL}_{(m)}(\thetaos)$ and broadcasts $\dot{\mL}(\thetaos)$ to each worker\;
\For{$m=1,2,\dots,M$ (distributedly)}{
Update $H_{(m,1)}$ according to (\ref{eq:two-step-update})\;
Calculate $H_{(m,1)} \dot{\mL} (\thetaos)$ and
transfer it to the central computer\;
}
The central computer computes
$
\thetatwo = \thetaos - M^{-1} \sum_{m=1}^M H_{(m,1)} \dot{\mL} (\thetaos).
$
\end{algorithm}

Similar to Algorithm \ref{alg:one-step}, Algorithm \ref{alg:two-step} incurs another three rounds of communication. Recall that three extra rounds of communication are needed for computing $\thetaos$. Thus, a total of six rounds of communication are needed for computing $\thetatwo$. The communication cost for each round remains of order $O(p)$. Consequently, the total communication cost of the two-stage estimator remains of order $O(p)$. Additionally, no Hessian matrix needs to be inverted. However, a better estimation accuracy could be achieved due to the additional updating stage, which leads to the next theorem.
\bet
\label{thm:one-step-updating}
Assume that the technical conditions (C1)--(C6) hold. Then, we have $\big\| \thetatwo - \thetaglo \big\| \leq \kappa_2 \Big(M^{-1} \sum_{m=1}^M \Big[ \|\thetam - \theta_0\|^2 + \| \hesmtrue - \Omega(\theta_0) \|_2^2 + \big\| \big\{ \dddot{\mL}_{(m)}(\theta_0) - \dot{\Omega}(\theta_0) \big\} \big\{ (\thetam - \theta_0) \otimes I_p \big\} \big\|_2 \Big] + \| \thetap - \theta_0 \| \Big) \big \| \thetaos - \thetaglo \big\|$ for some constant $\kappa_2>0$ with probability tending to one. Further, assuming that $Np^4 (\log p)^3/n^6 \to 0$, we have $\|\thetatwo - \thetaglo\| = o_p(N^{-1/2})$.
\eet
\noindent
The proof of Theorem 2 is given in Appendix A.2.
It could be verified that the discrepancy between $\thetatwo$ and $\thetaglo$ is further reduced from
$\|\thetaos - \thetaglo\| = O_p( p \log p /n^2) + o_p(1/\sqrt{N})$ to $ \|\thetatwo - \thetaglo\| = O_p(p^2(\log p)^{3/2}/n^3) + o_p(1/\sqrt{N})$ ; see Appendix A for more details. Accordingly, the optimal statistical efficiency can be achieved if
$Np^4 (\log p)^3/n^6 \to 0$. This is a weaker condition than that of the DQN(1) estimator. Next, we extend the idea of the DQN(2) estimator to develop the multi-stage DQN (DQN($K$)) estimator $\thetaK$. The detailed algorithm is given in Algorithm \ref{alg:multi-step}. The theoretical properties are summarized by Corollary \ref{thm:K-step-updating}.

\begin{algorithm}
\caption{Distributed K-Stage Quasi-Newton Algorithm}\label{alg:multi-step}
\KwIn{DQN($K\!-\!1$) estimator $\thetakone$ on the central computer, $\thetaktwo$, $ \dot{\mL} (\thetaktwo)$, and Hessian inverse approximation $H_{(m,K-2)}$ on the $m$-th worker\; }
\KwOut{ DQN($K$) estimator $\thetaK$\;}
The central computer broadcasts $\thetakone$ to each worker\;
\For{$m=1,2,\dots,M$ (distributedly)}{
Compute $\dot{\mL}_{(m)}(\thetakone) $ and
transfer it to the central computer\;
}
The central computer computes $\dot{\mL}(\thetakone)= M^{-1} \sum_{m=1}^M \dot{\mL}_{(m)}(\thetakone)$ and broadcasts $\dot{\mL}(\thetakone)$ to each worker\;
\For{$m=1,2,\dots,M$ (distributedly)}{
Compute $H_{(m,K-1)} = \big\{ V_{K-2} \big\}^\top H_{(m,K-2)} \big\{ V_{K-2} \big\} + \rho_{K-2} \big( \thetakone - \thetaktwo \big) \big( \thetakone - \thetaktwo \big)^\top$\;
Calculate $H_{(m,K-1)} \dot{\mL} (\thetaos)$ and
transfer it to the central computer\;
}
The central computer obtains
$
\thetaK = \thetakone - M^{-1} \sum_{m=1}^M H_{(m,K-1)} \dot{\mL} (\thetakone).
$
\end{algorithm}

\begin{corollary}
\label{thm:K-step-updating}
Assume that the technical conditions (C1)--(C6) hold. Then, we have
$\big\| \thetaK - \thetaglo \big\| \leq \kappa_{_{K}}  \Big(  M^{-1} \sum_{m=1}^M \Big[ \|\thetam - \theta_0\|^2 +  \| \hesmtrue - \Omega(\theta_0) \|_2^2 + \big\| \big\{ \dddot{\mL}_{(m)}(\theta_0) - \dot{\Omega}(\theta_0) \big\} \big\{ (\thetam - \theta_0) \otimes I_p \big\} \big\|_2 \Big] + \| \thetap - \theta_0 \|  \Big)^K \big \| \thetap - \theta_0 \big\|$ for some constant $\kappa_{_{K}}>0$ with probability tending to one. Further, assuming that $Np^{2K} (\log p)^{K+1} / n^{2K+2} \to 0$, we have $\|\thetaK - \thetaglo \| = o_p(N^{-1/2})$.
\end{corollary}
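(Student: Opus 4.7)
The plan is induction on the stage index $K$, with Theorems \ref{thm:one-step} and \ref{thm:one-step-updating} serving as the base cases $K=1$ and $K=2$. Writing
\begin{equation*}
A := M^{-1} \sum_{m=1}^M \Big[ \|\thetam - \theta_0\|^2 + \| \hesmtrue - \Omega(\theta_0)\|_2^2 + \big\|\{\dddot{\mL}_{(m)}(\theta_0) - \dot{\Omega}(\theta_0)\}\{(\thetam - \theta_0) \otimes I_p\}\big\|_2 \Big] + \|\thetap - \theta_0\|,
\end{equation*}
Theorem \ref{thm:one-step-updating} already reads as a one-step contraction $\|\thetatwo - \thetaglo\| \leq \kappa_2 A \|\thetaos - \thetaglo\|$. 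The inductive hypothesis I would establish is that this contraction persists at every stage: there exist constants $\tilde\kappa_k > 0$ such that $\|\thetakm - \thetaglo\| \leq \tilde\kappa_k A \|\thetakmone - \thetaglo\|$ on an event of probability tending to one. Iterating over $k = 1, \dots, K$ and setting $\kappa_{_{K}} = \prod_{k=1}^K \tilde\kappa_k$ immediately yields the stated bound.

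For the inductive step I would mirror the argument used for Theorem \ref{thm:one-step-updating}. The update $\thetakm = \thetakmone - M^{-1} \sum_m H_{(m,k-1)} \dot{\mL}(\thetakmone)$, together with a Taylor expansion of $\dot{\mL}(\thetakmone)$ about $\thetaglo$ (using $\dot{\mL}(\thetaglo) = 0$), rewrites the discrepancy as
\begin{equation*}
\thetakm - \thetaglo = \Big\{I_p - \Big(M^{-1}\sum_{m=1}^M H_{(m,k-1)}\Big) \ddot{\mL}(\widetilde\theta_k) \Big\}(\thetakmone - \thetaglo)
\end{equation*}
for some $\widetilde\theta_k$ between $\thetakmone$ and $\thetaglo$. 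The contraction factor is therefore controlled by $\big\|M^{-1}\sum_m H_{(m,k-1)} - \{\ddot{\mL}(\thetaglo)\}^{-1}\big\|_2$, which must be shown to be $O_p(A)$ uniformly in $k$.

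The main obstacle is propagating the accuracy of the BFGS-updated Hessian inverses through all $K$ stages. The recursion $H_{(m,k)} = V_{k-1}^\top H_{(m,k-1)} V_{k-1} + \rho_{k-1}(\thetakm - \thetakmone)(\thetakm - \thetakmone)^\top$ enforces the secant identity (\ref{eq:secant-condi}) at every step, and a Taylor expansion of the gradient difference about $\theta_0$ (invoking the Lipschitz bounds of (C4)) converts this identity into an approximate eigen-relation for $H_{(m,k-1)}$ with respect to $\ddot{\mL}_{(m)}(\theta_0)$. An auxiliary induction then shows $\big\|H_{(m,k-1)} - \{\ddot{\mL}_{(m)}(\theta_0)\}^{-1}\big\|_2$ inherits the same $O_p(A)$ order already established for $H_{(m,1)}$ in the proof of Theorem \ref{thm:one-step-updating}, provided one controls $\|V_{k-1}\|_2$ and $|\rho_{k-1}|$ uniformly via (C3)--(C5) and the positive-definiteness preservation of the BFGS formula noted after (\ref{eq:BFGS}). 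The delicate bookkeeping is to ensure that the multiplicative constants $\tilde\kappa_k$ do not blow up as $k$ grows; this follows by exploiting the Lipschitz continuity of $\ddot{\ell}$ and $\dddot{\ell}$ together with the boundedness of the relevant quantities on a shrinking neighborhood of $\theta_0$.

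The statistical efficiency claim then follows by substituting the stochastic orders of $\|\thetap - \theta_0\|$ and $A$ extracted in the discussions after Theorems \ref{thm:one-step} and \ref{thm:one-step-updating} into $\kappa_{_{K}} A^K \|\thetap - \theta_0\|$. Matching the two established special cases $\|\thetaos - \thetaglo\| = O_p(p \log p/n^2) + o_p(N^{-1/2})$ and $\|\thetatwo - \thetaglo\| = O_p(p^2 (\log p)^{3/2}/n^3) + o_p(N^{-1/2})$, the general rate becomes $\|\thetaK - \thetaglo\| = O_p\big(p^K (\log p)^{(K+1)/2}/n^{K+1}\big) + o_p(N^{-1/2})$. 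The hypothesis $Np^{2K}(\log p)^{K+1}/n^{2K+2} \to 0$ is exactly what is needed to force the leading term to be $o_p(N^{-1/2})$, completing the proof.
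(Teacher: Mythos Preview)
Your induction skeleton, the Newton-plus-correction decomposition, and the rate computation at the end all match the paper. The gap is in your auxiliary induction on the Hessian approximations. You propose to show that $\big\|H_{(m,k-1)} - \{\ddot{\mL}_{(m)}(\theta_0)\}^{-1}\big\|_2 = O_p(A)$ for each worker $m$ and then average. This does not work, for two reasons. First, the BFGS updates (\ref{eq:two-step-update}) use the \emph{global} secant pair $(s_{k-1},y_{k-1})$ with $y_{k-1}=\dot{\mL}(\thetakm)-\dot{\mL}(\thetakmone)$, so after even one update $H_{(m,k)}$ satisfies $H_{(m,k)}y_{k-1}=s_{k-1}$, which is an approximate eigen-relation for the \emph{global} Hessian $\ddot{\mL}(\theta_0)$, not the local $\ddot{\mL}_{(m)}(\theta_0)$; in the secant direction $H_{(m,k)}$ is pushed away from $\{\ddot{\mL}_{(m)}(\theta_0)\}^{-1}$ by an amount of order $\|\ddot{\mL}_{(m)}(\theta_0)-\ddot{\mL}(\theta_0)\|_2$, which is not $O_p(A)$. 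Second, even at stage $0$ the per-worker bound is only $\|H_{(m,0)}-\{\ddot{\mL}_{(m)}(\theta_0)\}^{-1}\|_2\lesssim\|\thetam-\theta_0\|$, and $M^{-1}\sum_m\|\thetam-\theta_0\|=O_p(n^{-1/2})$, which can dominate $A=O_p(p\sqrt{\log p}/n)$. The $O_p(A)$ rate for the averaged inverse in Theorem~\ref{thm:one-step} comes from a cancellation: the first-order term $M^{-1}\sum_m\{\hestrue\}^{-1}\{\hestrue-\hesmtrue\}\{\hestrue\}^{-1}$ vanishes identically, leaving only the squared deviations. A per-worker triangle inequality destroys this cancellation.

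The paper instead exploits that $V_{k-1}$ and $\rho_{k-1}$ are worker-independent, so the BFGS recursion commutes with averaging: $H_k:=M^{-1}\sum_m H_{(m,k)}$ itself satisfies the BFGS update with initial condition $H_0=M^{-1}\sum_m H_{(m,0)}$. One then works directly with $B_k=H_k^{-1}$ and propagates the bound $\|B_k-\ddot{\mL}(\thetaglo)\|_2\leq\kappa_k A$ by induction, using the error recursion for $E_k=\{\ddot{\mL}(\thetaglo)\}^{-1/2}\{B_k-\ddot{\mL}(\thetaglo)\}\{\ddot{\mL}(\thetaglo)\}^{-1/2}$ from \cite[Lemma~5.1]{broyden1973local} (for BFGS) or the explicit SR1 recursion. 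The base case $\|B_0-\ddot{\mL}(\thetaglo)\|_2=O_p(A)$ is exactly the content of Step~1 in the proof of Theorem~\ref{thm:one-step}, which is where the cancellation is used. Your reference to ``the same $O_p(A)$ order already established for $H_{(m,1)}$ in the proof of Theorem~\ref{thm:one-step-updating}'' is a misreading: that proof never bounds the individual $H_{(m,1)}$, only the aggregate $B_1$.
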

\noindent
The proof of Corollary \ref{thm:K-step-updating} is given in Appendix A.3. It could be found that Corollary \ref{thm:K-step-updating} is a directly generalized version of Theorem \ref{thm:one-step-updating}. To be more specific, the discrepancy between the DQN(K) estimator $\thetaK$ and $\thetaglo$ is further compressed from $\|\thetap - \thetaglo\|$ by $\Big(  M^{-1} \sum_{m=1}^M \Big[ \|\thetam - \theta_0\|^2 +  \| \hesmtrue - \Omega(\theta_0) \|_2^2 + \big\| \big\{ \dddot{\mL}_{(m)}(\theta_0) - \dot{\Omega}(\theta_0) \big\} \big\{ (\thetam - \theta_0) \otimes I_p \big\} \big\|_2 \Big] + \| \thetap - \theta_0 \|  \Big)^K$.
Consequently, the optimal statistical efficiency can be obtained using the DQN($K$) estimator with even weaker technical conditions. In other words, $Np^{2K} (\log p)^{K+1} / n^{2K+2} \to 0$.
Moreover, Algorithm \ref{alg:multi-step} shows that the DQN($K$) estimator requires $3K$ rounds of communication, with cost $O(p)$ for each round. Therefore, practical applications should consider the trade-off between statistical efficiency and time cost.

\csection{NUMERICAL STUDIES}


\csubsection{Performance of the DQN Algorithm}

We start with demonstrating the finite sample performance of the proposed DQN method. Specifically, we present two simulation examples as follows.

\begin{itemize}
\item {\sc Example 1. (Logistic Regression)} We consider a logistic regression, which is one of the most popular classification models. We set $\theta_0 = c_0 \gamma / \| \gamma\|$, where $\gamma \in \mR^p$ is generated from a standard normal distribution, and $c_0 = 1.5$ controls the signal strength. The covariate $X_i$ is generated from a multivariate normal distribution with $E(X_i) = 0$ and $\cov(X_{ij_1}, X_{ij_2}) = \rho^{|j_1 - j_2|}$ with $\rho = 0.5$ for $1 \leq j_1,j_2 \leq p$. Given $X_i$, the response $Y_i \in \{0,1\}$ is then generated according to
$P(Y_i = 1 | X_i,\theta_0) = \{1 + \exp(- X_i^\top \theta_0)\}^{-1}$.
\item {\sc Example 2. (Poisson Regression)} This is an example revised from \cite{fan2001variable}. Specifically, $\theta_0$ and $X_i$ are the same as those in {\sc Example 1} but with $c_0 = 0.3$ and $\rho = 0.2$. Conditional on $X_i$, response $Y_i$ is generated from a Poisson distribution with $ E(Y_i|X_i) = \exp(X_i^\top \theta_0)$.

\end{itemize}
\noindent
For each simulation example, the sample size is $N = 10^6$, and we vary the local data size $n$ and dimension $p$. The generated sample data are randomly distributed to different workers $M = N/n$.
We replicate the experiment $R= 100$ times for reliability.
\begin{table}
\centering
\caption{Log(MSE) values and the corresponding SD, IQR and range values for {\sc Example 1}. The numerical performance is evaluated for different methods with different feature dimensions $p (\times 10^2)$.  The whole sample size $N$ and local sample size $n$ are fixed to be $10^6$ and $2 \times 10^4$, respectively. The reported results are averaged for $R=100$ simulation replications. }\label{tab:mse-p}
\begin{spacing}{1}
\setlength{\tabcolsep}{1.1mm}{
\begin{tabular}{cc|cc|cc|cc|cc|cc|c}
\hline
\hline
& & \multicolumn{2}{c|}{Stage 0} & \multicolumn{2}{c|}{Stage 1} & \multicolumn{2}{c|}{Stage 2} & \multicolumn{2}{c|}{Stage 3} & \multicolumn{2}{c|}{Stage 4} &\\
&$p$ &SR1 &BFGS &SR1 &BFGS&SR1 &BFGS&SR1 &BFGS&SR1 &BFGS&MLE \\[0.1em]
\hline
log&1&-6.86&-6.86&-6.96&-6.96&-6.96&-6.96&-6.96&-6.96&-6.96&-6.96&-6.96\\
(MSE)&10&-3.91&-3.91&-4.61&-4.61&-4.62&-4.62&-4.64&-4.64&-4.65&-4.65&-4.65\\
&20&-2.65&-2.65&-3.87&-3.87&-3.90&-3.90&-3.93&-3.93&-3.96&-3.96&-3.96\\
\multicolumn{13}{c}{} \\
SD&1&0.15&0.15&0.16&0.16&0.16&0.16&0.16&0.16&0.16&0.16&0.16\\
&10&0.05&0.05&0.05&0.05&0.05&0.05&0.05&0.05&0.05&0.05&0.05\\
&20&0.03&0.03&0.04&0.04&0.04&0.04&0.04&0.04&0.04&0.04&0.04\\
\multicolumn{13}{c}{} \\
IQR&1&0.19&0.19&0.19&0.19&0.20&0.20&0.21&0.21&0.20&0.20&0.19\\
&10&0.06&0.06&0.05&0.05&0.05&0.05&0.05&0.05&0.06&0.06&0.06\\
&20&0.04&0.04&0.05&0.05&0.05&0.05&0.05&0.05&0.05&0.05&0.05\\
\multicolumn{13}{c}{} \\
range&1&0.68&0.68&0.69&0.69&0.69&0.69&0.70&0.70&0.70&0.70&0.70\\
&10&0.26&0.26&0.23&0.23&0.23&0.23&0.32&0.32&0.23&0.23&0.23\\
&20&0.18&0.18&0.20&0.20&0.20&0.20&0.19&0.19&0.19&0.19&0.20\\
\hline
\end{tabular}}
\end{spacing}
\end{table}

 To gauge the finite sample performance of the proposed method, various performance analyses are developed. Specifically,
let $\wh \theta_{\operatorname{stage,K}}^{(r)}$ be the DQN($K$) estimator (by SR1 or BFGS updating) obtained in the $r$-th replication. Then, the mean squared error (MSE) is defined as $\operatorname{MSE}= R^{-1} \sum_{r=1}^R \| \wh \theta_{\operatorname{stage,K}}^{(r)} - \theta_0\|^2$. Moreover, a total of four measures are developed to evaluate the estimator's stability and robustness. They are the MSE values in log-scale (i.e., $\log$(MSE)), standard deviation (SD) of $\log$(MSE), inter-quartile range (IQR) of $\log$(MSE), and range of $\log$(MSE). The detailed results are given in Tables \ref{tab:mse-p} and \ref{tab:mse-n}.
Because simulation results of {\sc Example 1} are quantitatively similar to those of {\sc Example 2}, we report the results for {\sc Example 1} only. The detailed results for {\sc Example 2} are given in Appendix E.

From Table \ref{tab:mse-p}, we find that the values of all four measures increase as $p$ decreases for a fixed $n$. By contrast, from Table \ref{tab:mse-n}, we find that, with a fixed $p$, larger $n$ always leads to an improved estimation performance in the sense that all four measure values approach those of MLE (i.e., $\thetaglo$). Moreover, when $p$ is relatively small or $n$ is relatively large, the $\log$(MSE) value of $\wh \theta_{\operatorname{stage},1}$ or $\wh \theta_{\operatorname{stage},2}$ is
comparable with that of MLE. However, as $p$ grows (or $n$ drops), more stages (i.e., larger $K$) are required to obtain an estimator with optimal statistical efficiency. Nevertheless, the number of required stages $K$ remains very small (e.g., $K \leq 4$). Thus, the algorithm is communicationally and computationally efficient. The SD, IQR, and range values also demonstrate similar patterns. These results are consistent with our theoretical findings in Theorems 1 and 2 and Corollary 1.

\begin{table}[h]
\caption{Log(MSE) values and the corresponding SD, IQR and range values for {\sc Example 1}. The numerical performance is evaluated for different $n (\times 10^2)$ and methods.  The whole sample size $N$ and feature dimension $p$ are fixed to $N = 10^6,$ and $p = 10 ^3$, respectively. Finally, the reported results are averaged based on $R=100$ simulations.}\label{tab:mse-n}
\centering
\begin{spacing}{1}
\setlength{\tabcolsep}{1.1mm}{
\begin{tabular}{cc|cc|cc|cc|cc|cc|c}
\hline
\hline
& & \multicolumn{2}{c|}{Stage 0} & \multicolumn{2}{c|}{Stage 1} & \multicolumn{2}{c|}{Stage 2} & \multicolumn{2}{c|}{Stage 3} & \multicolumn{2}{c|}{Stage 4} &\\
&$n$ &SR1 &BFGS &SR1 &BFGS&SR1 &BFGS&SR1 &BFGS&SR1 &BFGS&MLE \\[0.1em]
\hline
log&50&-2.97&-2.97&-5.23&-5.23&-5.12&-5.12&-5.29&-5.29&-5.32&-5.32&-5.34\\
(MSE)&100&-4.23&-4.23&-5.28&-5.28&-5.30&-5.30&-5.32&-5.32&-5.33&-5.33&-5.34\\
&500&-5.25&-5.25&-5.33&-5.33&-5.33&-5.33&-5.33&-5.33&-5.33&-5.33&-5.34\\
\multicolumn{13}{c}{} \\
SD&50&0.04&0.04&0.08&0.08&0.29&0.29&0.08&0.08&0.08&0.08&0.08\\
&100&0.05&0.05&0.08&0.08&0.08&0.08&0.08&0.08&0.08&0.08&0.08\\
&500&0.07&0.07&0.08&0.08&0.08&0.08&0.08&0.08&0.08&0.08&0.08\\
\multicolumn{13}{c}{} \\
IQR&50&0.04&0.04&0.10&0.10&0.18&0.18&0.10&0.10&0.10&0.10&0.09\\
&100&0.07&0.07&0.10&0.10&0.10&0.10&0.09&0.09&0.09&0.09&0.09\\
&500&0.09&0.09&0.09&0.09&0.09&0.09&0.09&0.09&0.09&0.09&0.09\\
\multicolumn{13}{c}{} \\
range&50&0.22&0.22&0.40&0.40&1.33&1.33&0.41&0.41&0.42&0.42&0.43\\
&100&0.31&0.31&0.40&0.40&0.40&0.40&0.41&0.41&0.42&0.42&0.43\\
&500&0.42&0.42&0.43&0.43&0.43&0.43&0.43&0.43&0.43&0.43&0.43\\
\hline
\end{tabular}}
\end{spacing}
\end{table}
\newpage

\csubsection{Comparison with Competing Methods}

We next compare the proposed method with the following four competing methods: (1) the distributed one-step Newton (DOSN) estimator of \cite{huang2019distributed},
(2) the communication-efficient surrogate likelihood (CSL) based estimator of \cite{jordan2019communication}, (3) the distributed momentum gradient descent (DMGD) estimator of \cite{goyal2017accurate}, and (4) the distributed asynchronous averaged quasi-Newton (DAQN) estimator of \cite{Soori2020dqn}. The simulation model used here is the same as that in Section 3.1. We fix the sample size to be $N = 10^6$, the number of workers to be $M=50$, and vary the dimension $p$ from $500$ to $2500$. Moreover, we set $K=4$ for {\sc Example 1} and $K=2$ for {\sc Example 2}. We replicate the experiment a total of $R= 100$ times.

 To gauge the finite performance of different methods, we consider four different performance measures. First, to measure the {\it estimation accuracy}, we focus on the $\log$(MSE) values. Second, to compare {\it computation efficiency}, we record for each method the computing time for the master plus the averaged computing time for each worker as $T_{\rm{1}}^{(r)}$ in the $r$-th ($1\leq r\leq R$) simulation. Then, the averaged computing time $T_{\rm{1}}$ for the $R$ simulations is calculated and reported. Third, to measure the {\it communication efficiency}, the communication time for each simulation $T_{\rm{2}}^{(r)}$ is estimated by the overall time cost $T^{(r)}$ minus the computing time $ T_{\rm{1}}^{(r)}$. Similarly, the averaged communication time $T_{\rm{2}}$ is calculated and reported. Finally, the averaged {\it total time cost $T$} is also reported for better comparison. The simulation results are reported in Table \ref{tab:tc} and Figure \ref{fig:cpt_methods}.

As shown in Figure \ref{fig:cpt_methods}, all methods demonstrate similar performance in terms of estimate accuracy with similar $\log$(MSE) values. From Table \ref{tab:tc}, the following conclusions could be drawn. {First}, for the {\it computation cost}, we find that (1) $T_1$ for the DMGD method is much larger than that for the other methods, because it requires a considerable number of iterations to converge; (2) the $T_1$ value of CSL increases dramatically as $p$ increases due to Hessian inverse calculation with complexity of $O(p^3)$; and (3) the DQN methods perform well with the lowest $T_1$ values, which is especially true for large $p$.
{Second}, for the {\it communication cost}, we find that the $T_2$ value for the DOSN method is the highest. This is as expected because DOSN needs to transfer a Hessian matrix for calculation. This leads to a complexity of order $O(p^2)$. In contrast, the DQN has the lowest $T_2$ value. Finally, in comparison of the {\it total time cost}, the DQN methods perform the best in terms of $T$. To summarize, the DQN methods demonstrate comparable estimation accuracy and the lowest total time cost.

\begin{table}
\caption{{ Averaged computation cost $T_1$, communication cost $T_2$ and total time cost $T$ for {\sc Examples 1} and 2. The time cost is evaluated for different methods with different feature dimensions $p$.  The whole sample size $N$ and local sample size $n$ are fixed to be $10^6$ and $2 \times 10^4$, respectively. The reported results are averaged for $R=100$ simulation replications. }}\label{tab:tc}
\renewcommand\arraystretch{1.5}
\centering
\begin{spacing}{0.8}
\setlength{\tabcolsep}{1.4mm}{
\begin{tabular}{rr|rrrrcc}
\hline
\hline
&$p$ &DOSN &CSL &DMGD&DAQN &DQN-BFGS&DQN-SR1 \\
\hline
&&\multicolumn{6}{c}{\sc Example 1} \\
$T_1$ & 500&0.95  &1.86&40.29&3.94&0.59&0.68\\
  & 1000&2.71  &5.54&70.49&13.44&1.11&1.36\\
  & 2000&8.69  &24.75&113.59&33.92&3.63&3.72\\
  & 2500&13.14 &45.37&267.96&68.14&5.57&7.27\\
  \hline
$T_2$ & 500&42.16 &0.95&21.36&1.56&0.85&0.67\\
  & 1000&212.94 &1.90&42.82&3.01&1.66&1.26\\
  & 2000&799.87 &3.86&72.77&5.56&3.46&2.42\\
  & 2500&1240.81 & 4.64&90.97&8.36&4.61&3.16\\
  \hline
$T$ & 500&43.12 &2.82&61.65&5.51&1.44&1.35\\
  & 1000&215.64 &7.44&113.32&16.45&2.76&2.62\\
  & 2000&808.56 &28.61&186.35&39.48&7.09&6.15\\
  & 2500&1253.95&  50.01&358.93&76.51&10.17&10.42\\
  \hline
&&\multicolumn{6}{c}{\sc Example 2} \\
$T_1$ & 500&0.50 &0.72&52.09&2.44&0.31&0.31\\
  & 1000&1.67 &2.24&70.20&7.56&0.56&0.60\\
  & 2000&8.26 &10.26&113.54&26.62&1.50&1.53\\
  & 2500&13.22&18.94&144.30&53.25&1.95&2.27\\
  \hline
$T_2$ & 500&40.47  &1.07&19.09&1.00&0.36&0.31\\
  & 1000&166.58 &1.68&41.57&2.20&0.69&0.56\\
  & 2000&775.31&3.09&72.49&5.35&1.42&1.18\\
  & 2500&1308.45&3.72&91.92&6.79&1.80&1.58\\
  \hline
$T$ & 500&40.96&1.79&71.18&3.44&0.67&0.62\\
  & 1000&168.24 &3.92&111.77&9.76&1.25&1.16\\
  & 2000&783.57&13.35&186.03&31.96&2.92&2.71\\
  & 2500&1321.68&22.66&236.23&60.04&3.75&3.85\\
\hline
\end{tabular}}
\end{spacing}
\end{table}
\begin{figure}[h]
\centering
\subfigure{\includegraphics[width=6in]{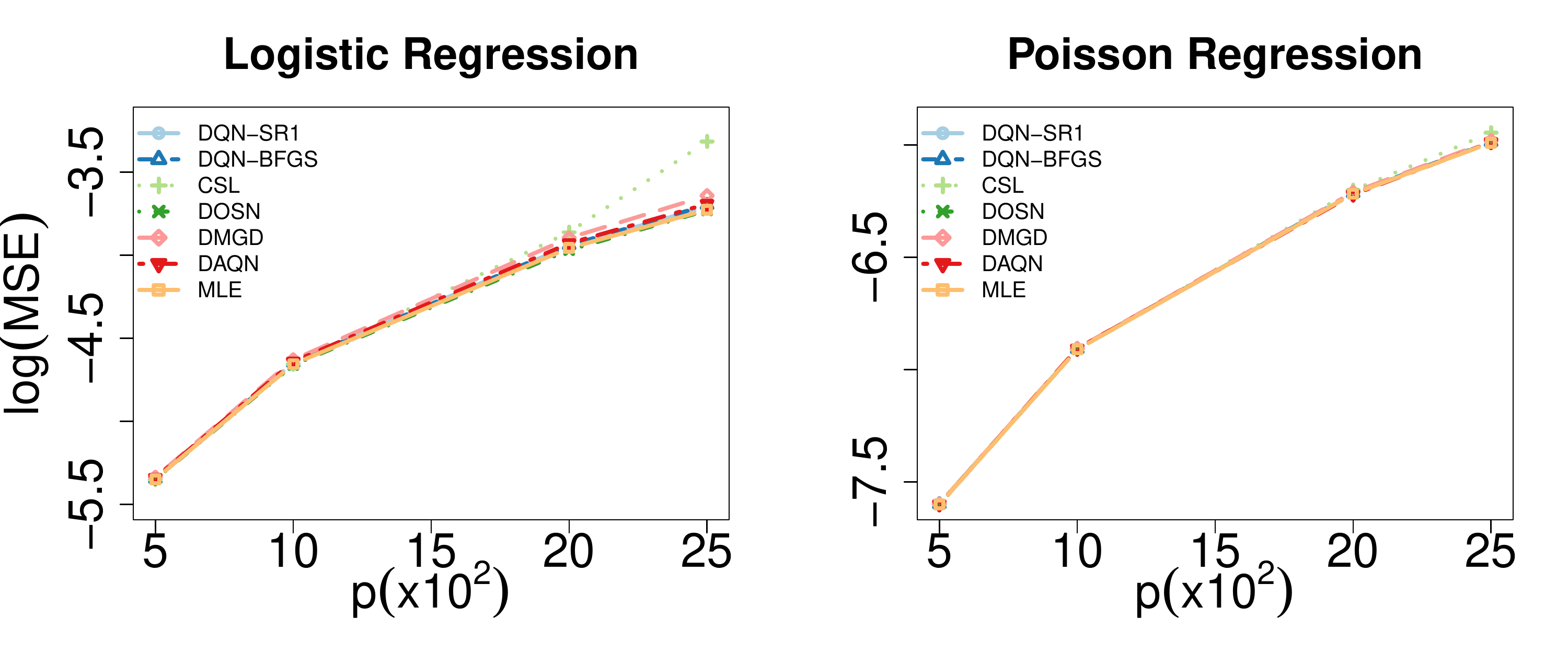} }
\caption{Log(MSE) values for different methods with different dimension $p$. The results for the logistic and Poisson regression models are given in the left and right panels, respectively. The whole sample size $N$ and local subsample size $n$ are fixed at $N = 10^6$ and $n = 2 \times 10^4$. The number of workers $M=50$. The reported log(MSE) values are averages for $R=100$ simulations.}
\label{fig:cpt_methods}
\end{figure}

\csubsection{Ultrahigh-Dimensional Features}

 We next consider the ultrahigh-dimensional feature situation with $p\gg n$. In this case, appropriate sparse structure has to be assumed to the true regression coefficient \citep{Fan:Lv:2008}. Therefore, various screening methods \citep{Fan:Lv:2008,Fan:Song:2010,Li:Peng:Zhang:2012,He:Wang:Hong:2013,Li:Li:2020} can be readily applied but in a distributed way. Once the feature dimension is significantly reduced, the DQN algorithm can be readily applied. For the purpose of illustration, we consider here the sure independence screening method for generalized linear
models \citep{Fan:Song:2010} and calculate the statistic in a distributed way as follows.

We start with a simulation setup as suggested by \cite{Fan:Song:2010}. More specifically, the covariates are generated by
 $X_{ij}=(\epi_{ij}+a_{ij} \epi)(1+a_{ij}^2)^{-1/2}$,
 where $\epi$ and $\{\epi_{ij}\}^{[p/3]}_{j=1}$ are independently and identically distributed with $N(0,1)$, $\{\epi_{ij}\}^{[2p/3]}_{j=[p/3]+1}$ are independently and identically distributed following a double exponential distribution with location and scale parameters to be 0 and 1, and $\{\epi_{ij}\}^{p}_{j=[2p/3]+1}$ are independently and identically distributed following a mixture normal distribution with equal weights on $N(-1,1)$ and $N(1,0.5)$. The $\{a_{ij}\}_{j=1}^q$ are independently and identically distributed with $N(0,1)$ for the first $q$ variables and $a_{j}$=0 for $j\geq q$. The true feature set is $\mM_T=\{1,...,s\}$ with $s=20$.  Define $\theta^\top=(\theta_j)=\mathbf{1}_{[s/5]}\otimes(1,-1.1,1.2,-1.3,1.4)/2$, where $\mathbf{1}_b\in\mR^b$ is a vector with all elements equal to 1, and $\otimes$ denotes the Kronecker product. The response $Y_i$ is generated by a standard logistic regression. The feature dimension $p$ and total sample size $N$ are set at 10$^4$ and 10$^5$, respectively. The number of workers is set to $M=20,40$, and $50$.

 Next, we follow \cite{Fan:Song:2010} and compute the marginal maximum likelihood estimator for each feature $j$ on the $m$-th worker as $\widehat{\theta}_{j,m}$. In most cases, this should be a biased estimate for $\theta_j$, but could be useful for variable screening. This leads to a total of $M$ marginal estimators $\widehat{\theta}_{j,m}$. These are then averaged as $\widetilde{\boldsymbol\theta}_j=M^{-1}\sum_m\widehat{\theta}_{j,m}$, which is an overall marginal estimator for $\theta_j$. We next obtain the estimated feature set as $\widetilde \mM=\{1\leq j\leq p:|\widetilde{\boldsymbol\theta}_j|\geq \gamma_n\}$, where $\gamma_n$ is appropriately selected such that $|\widetilde \mM|=\lceil n/\log(n)\rceil$; see \cite{Fan:Song:2010} for a more detailed discussion. Consequently, the condition (C6) for the DQN algorithm is automatically satisfied. Thereafter, the proposed DQN method can be readily applied to the dimension reduced problem with only the selected feature involved.

 To measure the performance of the distributed screening procedure and the DQN algorithm, we compute the coverage rate for the $r$-th ($1\leq r\leq R$) replication as CR$^{(r)}=|\widetilde \mM^{(r)}\bigcap\mM_T^{(r)}|/|\mM_T^{(r)}|$. Then, the overall coverage rate is given by $\mbox{CR}=R^{-1}\sum_r\mbox{CR}^{(r)}$. The other metrics used in Section 3.1 are also considered. The detailed results are given in Table \ref{tab:mse-sis-1}. From Table \ref{tab:mse-sis-1}, we find that the implemented screening procedure is screening consistent in the sense that all CR values are equal to 1. Furthermore, with a fixed $N$, we find that a larger $M$ always leads to a smaller $n$. This leads to a smaller screening feature set with size $\lceil n/\log(n)\rceil$. Consequently, fewer redundant features are included. This further results in even smaller $\log$(MSE) values. Lastly, for the DQN algorithms, a slightly larger number of stages (i.e., larger $K$) are required to obtain an estimator that is competitive with MLE. These results are consistent with our theoretical findings in Theorems 1 and 2 and Corollary 1.

\begin{table}[h]
\caption{Log(MSE) values and corresponding SD, IQR, and range for ultrahigh-dimensional case. The numerical performance is evaluated for different $M$ ($\times 10$) and methods. The whole sample size $N$ and feature dimension $p$ are fixed at $ 10^5$ and $ 10 ^4$, respectively. Finally, the reported results are averaged based on $R=100$ simulations.}\label{tab:mse-sis-1}
\centering
\begin{spacing}{1}
\setlength{\tabcolsep}{0.8mm}{
\begin{tabular}{ccc|cc|cc|cc|cc|cc|c}
\hline
\hline
& & &\multicolumn{2}{c|}{Stage 0} & \multicolumn{2}{c|}{Stage 1} & \multicolumn{2}{c|}{Stage 2} & \multicolumn{2}{c|}{Stage 3} & \multicolumn{2}{c|}{Stage 4} &\\
&$M$ &CR&SR1 &BFGS &SR1 &BFGS&SR1 &BFGS&SR1 &BFGS&SR1 &BFGS&MLE \\[0.1em]
\hline
log&2&1.00&0.34&0.34&-1.01&-0.95&-0.93&-0.86&-0.99&-1.01&-1.11&-1.08&-1.00\\
(MSE)&4&1.00&0.35&0.35&-1.34&-0.87&-1.29&-1.11&-1.42&-1.42&-1.47&-1.49&-1.45\\
&5&1.00&0.40&0.40&-1.37&-0.73&-1.38&-1.13&-1.56&-1.53&-1.61&-1.64&-1.60\\
\multicolumn{14}{c}{} \\
SD&2&1.00&0.05&0.05&0.05&0.04&0.05&0.05&0.05&0.05&0.05&0.05&0.05\\
&4&1.00&0.06&0.06&0.06&0.05&0.07&0.07&0.06&0.06&0.06&0.06&0.06\\
&5&1.00&0.07&0.07&0.06&0.08&0.08&0.09&0.07&0.07&0.07&0.07&0.07\\
\multicolumn{14}{c}{} \\
IQR&2&1.00&0.08&0.08&0.07&0.05&0.06&0.08&0.07&0.07&0.06&0.06&0.07\\
&4&1.00&0.09&0.09&0.09&0.08&0.10&0.10&0.10&0.10&0.09&0.09&0.09\\
&5&1.00&0.09&0.09&0.08&0.09&0.12&0.13&0.10&0.11&0.10&0.09&0.10\\
\multicolumn{14}{c}{} \\
range&2&1.00&0.25&0.25&0.24&0.19&0.24&0.25&0.22&0.22&0.24&0.24&0.25\\
&4&1.00&0.27&0.27&0.26&0.25&0.34&0.36&0.30&0.32&0.30&0.30&0.29\\
&5&1.00&0.30&0.30&0.30&0.37&0.38&0.46&0.34&0.37&0.34&0.33&0.32\\
\hline
\end{tabular}}
\end{spacing}
\end{table}

\csubsection{Real Data Analysis}
\noindent

 In this section, we apply the proposed method to the THU Chinese
news dataset for illustration. The dataset is publicly available at
 \url{http://thuctc.thunlp.org}. The dataset consists of 14 types of Chinese news collected from Sina news (\url{https://news.sina.com.cn}) from 2005 to 2011.

 For the purpose of illustration, we generate response $Y_i$ as follows. We first select all the news of type {\it technology} and define the response $Y_i=1$. This leads to a total of $N_p=162,929$ positive cases. We next randomly sample a total of $\lceil1.5N_p\rceil$ negative cases without replacement from the other types of news. The corresponding response $Y_i$ is defined to be 0. Then, the total sample size is given by $N=407,322$. Different words are then extracted from the original documents. Those words with top $\mathscr{F}$\% frequencies for each class are selected.  They are then coded as binary \%covariates. We consider $\mathscr{F}\%=0.3\%$, 0.4\%, and 0.5\%, which leads to $p=998$, 1333, and 1660, respectively.
All data are then randomly shuffled and distributed to $M=20$ worker computers. The competing methods and performance measures remain the same as those in Section 3.2 but with $K=5$. Because we do not know the ground truth in real data analysis, the global estimators are then treated as if they were the true parameters. The experiment is randomly replicated $R=10$ times for a reliable evaluation. The results are summarized in Table \ref{tab:real} and Figure \ref{fig:real}.

From Table \ref{tab:real}, we find that the proposed DQN method has the lowest computation cost $T_1$. It outperforms other competing methods significantly in terms of computation efficiency. The computation advantage is particularly apparent when the feature dimension $p$ is relatively large.
 Moreover, we find that the communication cost $T_2$ of the DQN methods is slightly higher than the smallest $T_2$ value for the DAQN method. However, the overall time cost of DQN (i.e., $T$) remains the smallest. This suggests that the proposed DQN methods are computationally very competitive. From Figure \ref{fig:real}, we find that the proposed DQN methods also outperform their competitors slightly in terms of estimation accuracy with the smallest log(MSE) values.

\begin{table}
\caption{{Averaged computation cost $T_1$, communication cost $T_2$ and total time cost $T$ for the THU Chinese news dataset. The time cost is evaluated for different methods with different feature dimensions $p$.  The whole sample size $N$ and number of workers $M$ are fixed to be $407,322$ and $20$, respectively. The reported results are averaged for $R=10$ simulation replications. }}\label{tab:real}
\renewcommand\arraystretch{1.5}
\centering
\begin{spacing}{0.8}
\setlength{\tabcolsep}{1.4mm}{
\begin{tabular}{rr|rrrccc}
\hline
\hline
&$p$ &DOSN &CSL &DMGD&DAQN &DQN-BFGS&DQN-SR1 \\
\hline
$T_1$ & 998&1.22 &3.54&104.93&13.34&0.72&0.95\\
& 1333&2.14 &5.44&126.40&20.59&1.29&1.66\\
& 1660&2.63 &8.69&151.70&28.84&1.96&2.39\\
\hline
$T_2$ & 998&62.56 &0.36&38.52&0.12&0.99&0.71\\
& 1333&129.28 &0.45&54.27&0.26&1.26&0.92\\
& 1660&193.91 &0.56&66.65&0.36&1.57&1.10\\
\hline
$T$  & 998&63.78&3.90&143.45&13.46&1.71&1.66\\
& 1333&131.42&5.88&180.67&20.84&2.55&2.58\\
& 1660&196.54&9.25&218.36&29.20&3.53&3.49\\
\hline
\end{tabular}}
\end{spacing}
\end{table}

\begin{figure}[h]
\centering
\subfigure{\includegraphics[width=4.3in]{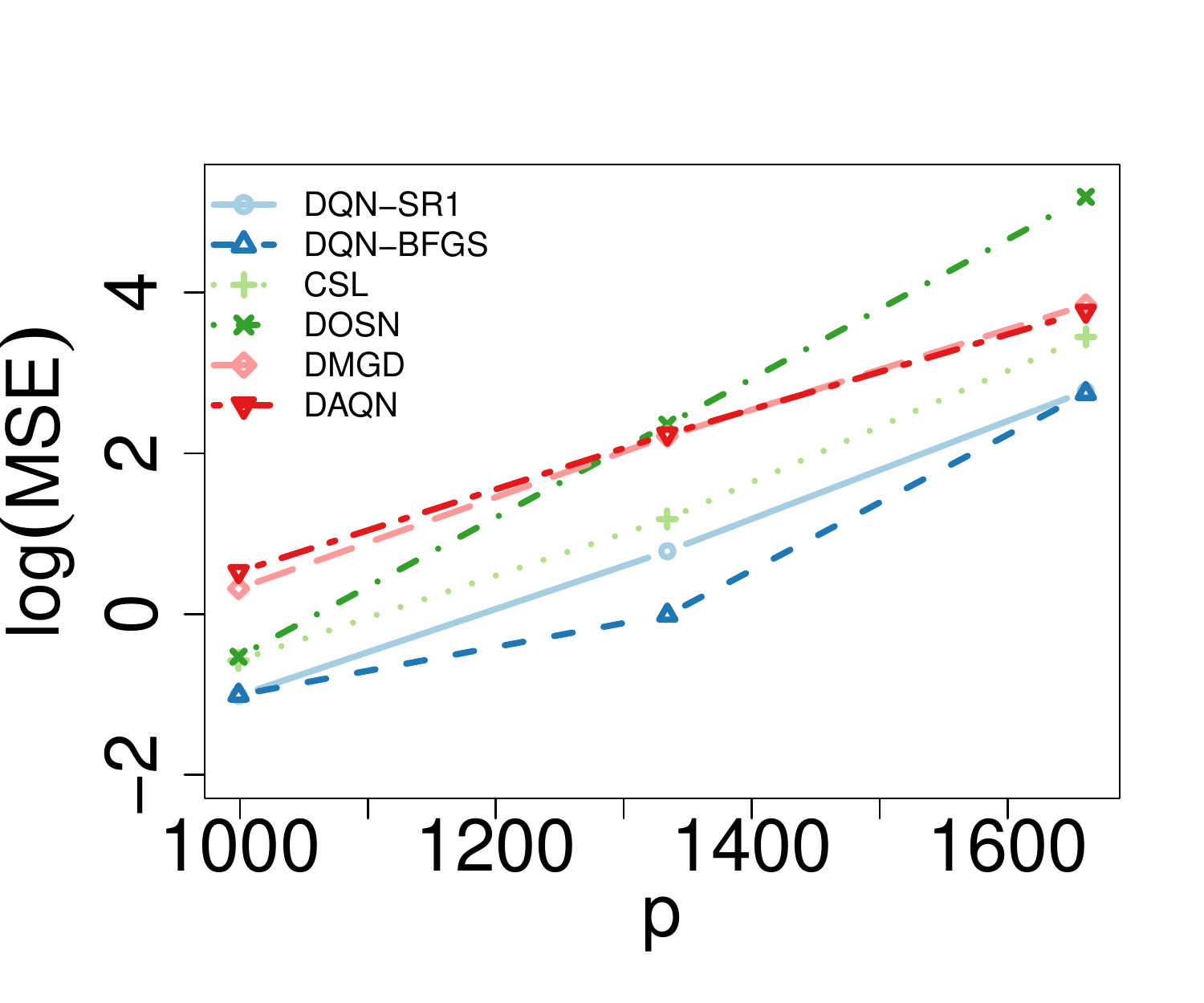} }
\caption{Log(MSE) values for the THU Chinese news dataset. The log(MSE) values are evaluated for different methods with different dimension $p$. The whole sample size $N$ is fixed ta $N = 407,322$ and number of workers $M$ is fixed at $M=20$. Finally, the reported log(MSE) values are averaged for $R=10$ simulations.}
\label{fig:real}
\end{figure}

\csection{CONCLUDING REMARKS}
\noindent

This article focuses on the discussion of statistical properties of DQN algorithms, which is motivated by two well-known quasi-Newton algorithms, i.e., SR1 and BFGS. The proposed algorithms are highly efficient both communicationally and computationally. We theoretically show that under mild conditions only a small number of iterations are needed to obtain an estimator as statistically efficient as the global one. As far as we know, this is the first work to discuss the statistical properties of the DQN methods.
Extensive numerical studies conducted on both simulation and real datasets are presented to illustrate the finite sample performance. To conclude this work, we discuss some interesting topics for future study. First, the DQN method proposed here requires that data among different worker computers are homogenous. This requirement may be difficult to satisfy for certain applications. Therefore, solving this problem
should be an exciting topic for future research. In addition, the proposed algorithm ignores privacy in inter-computer communication. This could be of great concern when sensitive information needs to be transferred. Conducting DQN while ensuring data privacy will be investigated in the future.

\noindent
\textbf{DATA AVAILABILITY STATEMENT.}
The datasets were derived from sources in the public domain: the official website of THU Chinese Text Classification Package (\url{http://thuctc.thunlp.org}).

\scsection{APPENDIX}

\renewcommand{\theequation}{A.\arabic{equation}}
\setcounter{equation}{0}
\noindent
\scsection{Appendix A: Proof of the Main Theoretical Results}
\noindent

For simplicity, we define the following notation in the proof.
Define $y_t = \dot{\mL}(\thetasptt) - \dot{\mL}(\thetaspt)$, and $s_t = \thetasptt - \thetaspt$ for $t \geq 1$; particularly, $y_0 = \dot{\mL}(\thetaos) - \dot{\mL}(\thetap)$ and $s_0 = \thetaos - \thetap$. In addition, for the BFGS updating, according to (\ref{eq:BFGS}), define
$
H_{t+1} = V_t^\top H_t V_t + \rho_t s_t s_t ^\top ,
$
where $V_t = I_p - \rho_t y_t s_t^\top $ and $\rho_t = 1/\big( s_t^\top y_t \big)$. For SR1 updating, according to (\ref{eq:rankone}), define
$
H_{t+1} = H_t + ( v_t^\top y_t )^{-1} ( v_t v_t^\top ),
$
for $t \geq 1$,
where $v_t = s_t - H_t y_t$. Particularly, $H_0 = M^{-1} \sum_{m=1}^M H_{(m,0)}$.

\scsubsection{A.1: Proof of Theorem 1}
\noindent

We decompose the theorem proof into two parts. In the first part, we show that the distance between $\thetaos$ and $\thetaglo$ is bounded by $\kappa \Big( M^{-1} \sum_{m=1}^M \Big[ \|\thetam - \theta_0\|^2 + \| \hesmtrue - \Omega(\theta_0) \|_2^2 + \big\|\big\{ \dddot{\mL}_{(m)}(\theta_0) - \dot{\Omega}(\theta_0) \big\} \big\{ (\thetam - \theta_0) \otimes I_p \big\} \big\|_2\Big] + \| \thetap - \theta_0 \|  \Big) \big \| \thetap - \theta_0 \big\|$ with probability tending to 1.
In the second part, we verify that when $N (p \log p)^2 / n^4 \to 0$, then $\| \thetaos - \thetaglo \| = o_p(N^{-1/2})$.

{\sc Part 1.} To analyze $\|\thetaos - \thetaglo\|$, we first define the following ``good events'':
\beqr
\label{apb:good-event}
\mE_0 &=& \bigg\{ \| \thetaglo - \theta_0 \| \leq \frac{\tau_{\min}}{4 C_{\max}} \bigg\} \\
\mE_m &=& \bigg\{ \| \thetam - \theta_0 \| \leq \min\Big\{ \frac{\tau_{\min}}{4 C_{\max}}, \delta \Big\} ,n^{-1} \sum_{i \in \mS_m} C(X_i,Y_i) \leq 2 C_{\max} , \nonumber\\
&&  \| \ddot{\mL}_{(m)} (\theta_0) - \Omega(\theta_0) \|_2 \leq \frac{\delta \tau_{\min}}{ 4}, \| \dot{\mL}_{(m)} (\theta_0) \| \leq \frac{ (1- \delta) \tau_{\min} \delta_{\min} }{4}  \bigg\} , \nonumber
\eeqr
where $\delta_{\min} = \min \big\{ \delta, \delta \tau_{\min} /(4 C_{\max}) \big\}$.
By Lemma \ref{lm:bad-event}, we know $P(\bigcup_{m=0}^M \mE_m^c ) \to 0$. In addition, it could be verified that the events $\bigcap_{m=0}^M\mE^{\prime}_m$ defined in Lemma \ref{lm:new-event} holds under $\bigcap_{m=0}^M \mE_m $. Thus, it suffices to analyze the upper bound of $\|\thetaos - \thetaglo\|$ under $\bigcap_{m=0}^M \mE_m $ and $\bigcap_{m=0}^M\mE^{\prime}_m$.

We then proceed to study {\sc Part 1}. Recall the definition of $ \thetaos$, then by (C5), we have
$\thetaos = \thetap - M^{-1} \sum_{m=1}^M \big\{ \hesmp \big\}^{-1} \dot{\mL} (\thetap) $
.
In addition, define $\thetanr = \thetap - \big\{\ddot{\mL}(\thetap) \big\}^{-1} \dot{\mL}(\thetap)$ to represent the one-step Newton--Raphson estimator. Then,
by the triangle inequality,
we have $\|\thetaos - \thetaglo \| \leq$
\beqrs
&& \| \thetaos - \thetanr \| + \| \thetanr - \thetaglo \| \\
&=& \Big\| \thetap - M^{-1} \sum_{m=1}^M \big\{ \hesmp \big\}^{-1} \dot{\mL} (\thetap) \\
&& - \Big[ \thetap - \big\{\ddot{\mL}(\thetap) \big\}^{-1} \dot{\mL}(\thetap) \Big] \Big\| + \| \thetanr - \thetaos \| \\
&=& \bigg\| \Big[ \big\{\ddot{\mL}(\thetap) \big\}^{-1} - M^{-1} \sum_{m=1}^M \big\{ \hesmp \big\}^{-1} \Big]\dot{\mL} (\thetap) \bigg\| + \| \thetanr - \thetaglo \|.
\eeqrs
Denote $\Delta_1 =  \big\{\ddot{\mL}(\thetap) \big\}^{-1} - M^{-1} \sum_{m=1}^M \big\{ \hesmp \big\}^{-1}$. We then investigate $\Delta_1$, $\thetanr- \thetaglo$, and $\dot{\mL} (\thetap)$ in the following three steps, respectively.

\emph{Step 1.}
By the triangle inequality, we have
$
\|\Delta_1 \|_2 \leq \big \| \Hint - \Htrue \big\|_2 + \big\| \Htrue - \Havgtrue \big\|_2 + \big\| M^{-1} \sum_{m=1}^M \big\{ \hesmtrue \big\}^{-1} - M^{-1} \sum_{m=1}^M $ $\big\{ \ddot{\mL}_{(m)}(\thetam) \big\}^{-1} \big\|_2 := \big\|\Delta^{(1)}_1\big\|_2 + \| \Delta^{(2)}_1\|_2 + \|\Delta^{(3)}_1 \|_2.
$
We proceed to calculate the three terms separately.

\emph{Step 1.1.} First, for any matrix $B$, we have $\|(B+\Delta B)^{-1} - B^{-1} \|_2 \leq \|B^{-1} \|_2^2 \| \Delta B\|_2$ \citep{jordan2019communication}. Substituting $B = \hestrue $ and $\Delta B = \hesint - \hestrue$, it could be shown that
\beqrs
\label{apb:deltaone}
\|\Delta_1^{(1)} \|_2 &\leq & \Big \| \Htrue \Big\|_2^2 \Big\| \hesint - \hestrue \Big\|_2 \leq \frac{4}{(1-\delta)^2 \tau_{\min}^2} 2 C_{\max} \| \thetap - \theta_0 \|.
\eeqrs
The second inequality holds because
$
\big\| \Htrue \big \|_2 \leq 2/\big\{(1-\delta) \tau_{\min} \big\}
$
under $\bigcap_{m=0}^M\mE^\prime_m$
and
$
\| \hesint - \hestrue \|_2 \leq 2 C_{\max} \| \thetap - \theta_0 \|
$
under $\bigcap_{m=0}^M \mE_m$.
Consequently, there exists a constant $\kappa > 0$ such that
$
\|\Delta_{1}^{(1)} \|_2 \leq \kappa \| \thetap - \theta_0 \|/(6 \times 2 C_{\max}) .
$

\emph{Step 1.2}
Next, we analyze $\Delta_1^{(2)}$. It could be shown that $\Delta_{1}^{(2)} =$
\beqr
&&M^{-1} \sum_{m=1}^M \big\{ \ddot{\mL}_{(m)} (\theta_0) \big\}^{-1} \big\{  \hestrue - \ddot{\mL}_{(m)} (\theta_0) \big\} \big\{\hestrue\big\}^{-1}\nonumber \\
&=& M^{-1} \sum_{m=1}^M \Big( \big[ \big\{\hesmtrue \big\}^{-1} - \big\{ \hestrue \big\}^{-1} \big] \big\{ \hestrue - \hesmtrue \big\} \big\{ \hestrue \big\}^{-1} + \nonumber \\
&&\big\{ \hestrue \big\}^{-1} \big\{ \hestrue - \hesmtrue \big\} \big\{ \hestrue \big\}^{-1} \Big) \nonumber\\
&=& M^{-1} \sum_{m=1}^M \big\{ \ddot{\mL}_{(m)} (\theta_0) \big\}^{-1}  \big\{  \hestrue - \ddot{\mL}_{(m)} (\theta_0) \big\} \big\{\hestrue\big\}^{-1}  \big\{  \hestrue - \ddot{\mL}_{(m)} (\theta_0) \big\} \big\{\hestrue\big\}^{-1}. \nonumber
\eeqr
We then have $\|\Delta_{1}^{(2)} \|_2 \leq \| \big\{\hesmtrue\big\}^{-1} \|_2 \| \Htrue \|_2^2 \times M^{-1} \sum_{m=1}^M \big\| \hestrue - $ $\hesmtrue \big\|_2^2 \leq 6 \big\{ (1 - \delta)^3 \tau_{\min}^3 \big\}^{-1} \times 8 M \sum_{m=1}^M \big\| \hesmtrue - \Omega(\theta_0) \big\|_2^2 $.
This is because
$
\big\| \hestrue - \hesmtrue \big\|_2^2 \leq 2 \Big\{ \big\| \hestrue - \Omega(\theta_0) \|_2^2 + \| \hesmtrue - \Omega(\theta_0) \big\|_2^2 \Big\}
$
and
$
\big\| \hestrue - \Omega(\theta_0) \|_2^2 \leq
\big\| M^{-1} \sum_{m=1}^M \big(\hesmtrue - \Omega(\theta_0) \big) \big\|_2^2 \leq (1 +1/M) M^{-1} \sum_{m=1}^M  \| \hesmtrue - \Omega(\theta_0) \big\|_2^2
$.
Therefore, there exists a constant $\kappa > 0$ such that

$$
\big\|\Delta_{1}^{(2)} \big\|_2 \leq \frac{\kappa}{6 \times 2 C_{\max}} M^{-1} \sum_{m=1}^M \| \hesmtrue - \Omega(\theta_0) \|_2^2.
$$

\emph{Step 1.3}. Moreover, it could be proved that
$$
\Delta_{1}^{(3)} = M^{-1} \sum_{m=1}^M \big\{\hesmp\big\}^{-1} \Big\{\hesmp - \hesmtrue \Big\}  \big\{\hesmtrue \big\}^{-1}.
$$
By Taylor's expansion, Cauchy--Schwarz inequality, and \emph{Step 1.2}, we have
\beqr
\label{apa:delta1-3}
\|\Delta_{1}^{(3)} \|_2 &\leq& \Big\| M^{-1} \sum_{m=1}^M \big\{\hestrue\big\}^{-1} \Big\{\hesmp - \hesmtrue \Big\} \big\{\hestrue\big\}^{-1}  \Big\|_2 \nonumber\\
&&+ \frac{\kappa}{6 \times 2 C_{\max}} M^{-1} \sum_{m=1}^M \Big\{ \| \hesmtrue - \Omega(\theta_0) \|_2^2 + \| \thetam -\theta_0 \|^2 \Big\}.
\eeqr
Hence, it suffices to study the first term of (\ref{apa:delta1-3}). Using Taylor's expansion again, it could be verified that $\hesmp - \hesmtrue =$
\beqrs
&&\dddot{\mL}_{(m)}(\theta_0) \Big\{ (\thetam - \theta_0) \otimes I_p \Big\} + \big\{ \dddot{\mL}_{(m)}(\xi_{(m)}) - \dddot{\mL}_{(m)}(\theta_0) \big\} \big\{ (\thetam - \theta_0) \otimes I_p \big\} \\
&=& \Big\{ \dddot{\mL}_{(m)}(\theta_0) - \dot{\Omega}(\theta_0) \Big\} \Big\{ (\thetam - \theta_0) \otimes I_p \Big\} + \dot{\Omega}(\theta_0) \Big\{ (\thetam - \theta_0) \otimes I_p \Big\}  + \mO,
\eeqrs
where $\xi_{(m)} = \eta_{(m)} \thetam + (1-\eta_{(m)}) \theta_0$ for some $0 \leq \eta_{(m)} \leq 1$, $\mO = \big\{ \dddot{\mL}_{(m)}(\xi_{(m)}) - \dddot{\mL}_{(m)}(\theta_0) \big\} \big\{ (\thetam - \theta_0) \otimes I_p \big\}$. In addition, we have $\|\mO\|_2 \leq 2C_{\max} \|\thetam - \theta_0 \|^2$ by (C4).
Replacing the results back into (\ref{apa:delta1-3}), we obtain
$
\|\Delta_{1}^{(3)} \|_2 \leq \kappa \Big(  M^{-1} $ $ \sum_{m=1}^M \Big[ \|\thetam - \theta_0\|^2 +  \| \hesmtrue - \Omega(\theta_0) \|_2^2 + \big\| \big\{ \dddot{\mL}_{(m)}(\theta_0) - \dot{\Omega}(\theta_0) \big\} \big\{ (\thetam - \theta_0) \otimes I_p \big\} \big\|_2 \Big] + \| \thetap - \theta_0 \|  \Big) /(6 \times 2 C_{\max}).
$
Combining the above-mentioned results, we have
$
\|\Delta_{1}\|_2 \leq \kappa \Big(  M^{-1} \sum_{m=1}^M \Big[ \|\thetam - \theta_0\|^2 +  \| \hesmtrue - \Omega(\theta_0) \|_2^2 + \big \|\big\{ \dddot{\mL}_{(m)}(\theta_0) - \dot{\Omega}(\theta_0) \big\} \big\{ (\thetam - \theta_0) \otimes I_p \big\} \big\|_2 \Big] + \| \thetap - \theta_0 \|  \Big) /(4 C_{\max}).
$
This finishes the proof of \emph{Step 1.}

\emph{Step 2.} In this step, we study $\thetanr- \thetaglo$ and $\dot{\mL} (\thetap)$.
From Theorem 5.3 in \cite{bubeck2015theory}, when $\|\thetap - \thetaglo \| \leq \lambda_{\min} \big\{ \ddot{\mL} (\thetaglo) \big\} / \big(2 C_{\operatorname{ge}} \big)$, where $C_{\operatorname{ge}}$ is the global Lipschitz constant of $\hestheta$ such that $\| \ddot{\mL} (\theta^{\prime}) - \ddot{\mL} (\theta^{\prime\prime})  \| \leq C_{\operatorname{ge}} \|\theta^{\prime} - \theta^{\prime\prime}\|$, we have
\beq
\label{apa:nr-eq}
\|\thetanr - \thetaglo \| \leq \frac{C_{\operatorname{ge}} } {\lambda_{\min} \big\{ \ddot{\mL} (\thetaglo) \big\} } \| \thetap - \thetaglo \|^2 \leq \frac{4 C_{\max}}{ (1-\delta) \tau_{\min} } \| \thetap - \thetaglo \|^2.
\eeq
Moreover, by (C4), it could be verified that
$
\|\dot{\mL} (\thetap) - \dot{\mL} (\thetaglo) \| \leq 2C_{\max} $ $\| \thetap - \thetaglo \|.
$
This finishes the proof of \emph{Step 2.}

Combining the results of \emph{Steps 1} and \emph{ 2,} we have
$
\| \thetaos - \thetaglo \| \leq \kappa \Big(  M^{-1} \sum_{m=1}^M $ $ \Big[ \|\thetam - \theta_0\|^2 +  \| \hesmtrue - \Omega(\theta_0) \|_2^2 + \big\|\big\{ \dddot{\mL}_{(m)}(\theta_0) - \dot{\Omega}(\theta_0) \big\} \big\{ (\thetam - \theta_0) \otimes I_p \big\}\big\|_2 \Big]  + \| \thetap - \theta_0 \| + \| \thetaglo - \theta_0 \|  \Big) \big \| \thetap - \thetaglo \big\|
$
with probability tending to 1. Noting that $\|\thetaglo - \theta_0\|$ is a negligible higher-order term, and we finish the first part.

{\sc Part 2.} To prove the second part, we separately analyze the convergence properties of $M^{-1} \sum_{m=1}^M \|\thetam - \theta_0\|^2$, $ M^{-1} \sum_{m=1}^M   \| \hesmtrue - \Omega(\theta_0) \|_2^2$, $M^{-1} \sum_{m=1}^M \big\{ \dddot{\mL}_{(m)}(\theta_0) - \dot{\Omega}(\theta_0) \big\} \big\{ (\thetam - \theta_0) \otimes I_p \big\}$ and $\|\thetap - \theta_0\|^2$.
By Lemma \ref{lm:est-expectation} we know
\beqrs
\label{apa:expect-order}
E \| \thetam - \theta_0 \|^2 &\leq& C_1 n^{-1} C_G^2\big\{ 1 + o(1) \big\} \\
E \Big\{ \| \hesmtrue - \Omega(\theta_0)  \|_2^2 \Big\} &\leq&  C_2 \frac{\log p}{n} \big\{ 1 + o(1) \big\} \nonumber \\
E \Big\| \big\{ \dddot{\mL}_{(m)}(\theta_0) - \dot{\Omega}(\theta_0) \big\} \big\{ (\thetam - \theta_0) \otimes I_p \big\}\Big\|_2 &\leq&  C_4 \frac{ p \sqrt{\log p}}{n} \big\{ 1 + o(1) \big\} \nonumber \\
E\big[ \| \thetap - \theta_0 \|^2  \big] &\leq& \bigg( \frac{2 C_G^2}{ \tau_{\min}^2 N } + \frac{C_3 C_G^2 C_H^2 \log p}{ \tau_{\min}^4 n^2 }\bigg) \big\{ 1 + o(1)\big\} \nonumber
\eeqrs
for some positive constants $C_1$--$C_4$. Moreover,
by Markov's inequality, we have
\begin{gather}
M^{-1} \sum_{m=1}^M \Big\{ \|\thetam - \theta_0\|^2 +  \| \hesmtrue - \Omega(\theta_0) \|_2^2 \Big\} + \big\|\big\{ \dddot{\mL}_{(m)}(\theta_0) - \dot{\Omega}(\theta_0) \big\} \big\{ (\thetam - \theta_0) \otimes I_p \big\}\big\|_2  \nonumber\\
+ \| \thetap - \theta_0 \|  = O_p( n^{-1} p \sqrt{\log p}+ N^{-1/2}), \text{ and } \|\thetap - \theta_0\| = O_p(1/\sqrt{N} + \sqrt{\log p}/n).
\label{apa:one-stage-order}
\end{gather}
Hence, $\|\thetaos - \thetaglo\| = O_p(n^{-2} p \log p ) + o_p(N^{-1/2})$. Furthermore, under the condition $N( p \log p)^2/n^4 $ $\to 0$, we have
$
N^{1/2} \| \thetaos - \thetaglo \|  \to_p 0,
$
which finishes the proof of the second part, thereby completing the proof of the entire theorem.

\scsubsection{A.2: Proof of Theorem 2}
\noindent

To verify Theorem \ref{thm:one-step-updating}, we first prove that $\big\| \thetatwo - \thetaglo \big\| \leq \kappa_2  \Big(  M^{-1} \sum_{m=1}^M \Big[ \|\thetam - \theta_0\|^2 +  \| \hesmtrue - \Omega(\theta_0) \|_2^2 + \big\|\big\{ \dddot{\mL}_{(m)}(\theta_0) - \dot{\Omega}(\theta_0) \big\} \big\{ (\thetam - \theta_0) \otimes I_p \big\}\big\|_2 \Big] + \| \thetap - \theta_0 \|  \Big) \big \| \thetaos - \thetaglo \big\|$ for some constant $\kappa_2>0$, with probability tending to 1. Next, we verify the optimality of $\thetatwo$ under the condition $Np^4 (\log p)^3/n^6\to 0$.

Note that by Algorithms \ref{alg:two-step} and \ref{alg:SR1-two-step}, the proposed methods realize the global update of the approximated Hessian inverse. In other words, the two-stage estimator update (\ref{eq:bfgs-two-step}) is equal to $\thetatwo = \thetaos - H_1 \dot{\mL}(\thetaos)$.
For convenience, instead of directly studying $\thetaos - H_1 \dot{\mL}(\thetaos)$, we investigate $\thetaos - (B_1)^{-1} \dot{\mL}(\thetaos)$, where $B_1 = H_1^{-1}$. By the triangle inequality, it could be verified that
$$
\|\thetatwo - \thetaglo\| \leq \| \thetaos - \big\{\ddot{\mL}(\thetaglo) \big\}^{-1} \dot{\mL}(\thetaos) - \thetaglo \| + \Big\| \Big[ \big\{\ddot{\mL}(\thetaglo) \big\}^{-1} - B_1^{-1} \Big] \dot{\mL}(\thetaos)  \Big\| .
$$
We denote $\Delta_2^{(1)} = \thetaos - \big\{\ddot{\mL}(\thetaglo) \big\}^{-1} \dot{\mL}(\thetaos) - \thetaglo $ and $\Delta_2^{(2)} = \Big[ \big\{\ddot{\mL}(\thetaglo) \big\}^{-1} - B_1^{-1} \Big] \dot{\mL}(\thetaos)$, where $\Delta_2^{(1)}$ is independent of SR1 or BFGS update. Furthermore, by similar analytical techniques as those used in (\ref{apa:nr-eq}), we have
$
\|\Delta_2^{(1)} \| \leq \kappa_2^{\prime} \| \thetaos - \thetaglo \|^2
$
for some constant $\kappa_2^{\prime} > 0$ with probability tending to 1. Hence, it suffices to study $\Delta_2^{(2)}$. Therefore, we investigate $\Delta_2^{(2)}$ under the good events $\bigcap_{m=0}^M \mE_m $ and $\mE^{\prime}$ by SR1 and BFGS update separately.

{\sc Part 1 (SR1).} By the Sherman--Morrison formula \cite[Theorem 10.8]{burden2015numerical}, the SR1 updating formula can be expressed as
$$
B_1 = B_0 + \frac{ (y_0 - B_0 s_0 )(y_0 - B_0 s_0)^\top }{ (y_0 - B_0 s_0)^\top y_0},
$$
where $B_0 = H_0^{-1}$. Then, we proceed to study $\Delta_2^{(2)}$, which can be rewritten as
\beq
\Delta_2^{(2)} = \big\{ \ddot{\mL}(\thetaglo) \big\}^{-1} \big\{ B_1 - \ddot{\mL}(\thetaglo) \big\} B_1^{-1} \dot{\mL}(\thetaos).
\eeq
Further, when $s_1 = B_1^{-1} \dot{\mL}(\thetaos)$, by the triangle inequality, we have
$$
\|\Delta_2^{(2)}\| \leq \| \big\{ \ddot{\mL}(\thetaglo) \big\}^{-1} \|_2 \big\{ \|y_1 - B_1 s_1\| + \|y_1 - \ddot{\mL}(\thetaglo) s_1 \| \big\}.
$$
We then study $y_1 - B_1 s_1$ and $y_1 - \ddot{\mL}(\thetaglo) s_1$.

\emph{Step 1.} First, we investigate $y_1 - B_1 s_1$. By (\ref{eq:rankone}), we have
$$
y_1 - B_1 s_1 = y_1 - B_0 s_1 + \frac{r_0 r_0^\top s_1}{ r_0^\top s_0},
$$
where $r_t = y_t - B_t s_t$ for any $t > 0$.
Then by Taylor's expansion, it could be proved that
\beqr
\label{apc:r1-1}
\|y_1 - B_0 s_1 \| &\leq& \Big\| ( \ddot{\mL}(\thetaos) - B_0 )(\thetatwo - \thetaos) + \ddot{\mL}(\xi_1) (\thetatwo - \thetaos) \nonumber \\
&& - \ddot{\mL}(\thetaos) (\thetatwo - \thetaos) \Big\| \nonumber\\
&\leq& 2 \Big\{ \| \ddot{\mL}(\thetaos) - B_0 \|_2 \| \thetaos - \thetaglo \| + \| \ddot{\mL}(\xi_1) - \ddot{\mL}(\thetaos) \|_2 \| \thetaos - \thetaglo\| \Big\} \nonumber\\
&\leq & 2 \Big\{ \| \ddot{\mL}(\thetaos) - B_0 \|_2 + 2C_{\max} \| \thetaos - \thetaglo \| \Big\}\| \thetaos - \thetaglo \|.
\eeqr
Here $\xi_1 = \eta_1 \thetatwo + (1 - \eta_1) \thetaos$ with some $0 \leq \eta_1 \leq 1$. The second inequality in (\ref{apc:r1-1}) holds by the triangle inequality $\| \thetatwo - \thetaos \| \leq \| \thetatwo - \thetaglo \| + \| \thetaos - \thetaglo \|$ and Lemma \ref{lm:linearly-converge}. The last inequality in (\ref{apc:r1-1}) holds by (C4).
In addition, from (\ref{apd:well-define}), it could be verified that $|( r_0^\top s_0)^{-1} (r_0 r_0^\top s_1)| \leq $
\beqr
\label{apc:r1-2}
&& \frac{ \|r_0 \| \| s_1 \|  }{c_1 \|s_0\|} = \frac{ \| \dot{\mL}(\thetaos) - \dot{\mL}(\thetap) - B_0 (\thetaos - \thetap) \| \| \thetatwo - \thetaos \| }{c_1 \| \thetaos - \thetap \|} \nonumber\\
&=& \frac{ \|\big\{ \ddot{\mL}(\thetaos) - B_0  \big\} s_0 +\big\{ \ddot{\mL} (\xi_0) - \ddot{\mL}(\thetaos) \big\} s_0 \| \| \thetatwo - \thetaos \| }{c_1 \| \thetaos - \thetap \|} \nonumber\\
&\leq& 2 \Big\{ \| \ddot{\mL}(\thetaos) - B_0 \|_2 \| \thetaos - \thetaglo \| + 4 C_{\max} \| \thetaos - \thetaglo \| \| \thetap - \thetaglo \|\Big\}.
\eeqr
Here $\xi_0 = \eta_0 \thetaos + (1 - \eta_1) \thetap$ with some $0 \leq \eta_0 \leq 1$, and the last inequality holds by Lemma \ref{lm:linearly-converge} and (C4).
Combining the results of (\ref{apc:r1-1}) and (\ref{apc:r1-2}), we have
$
\| y_1 - B_1 s_1 \| \leq 4 \Big\{ \| \ddot{\mL}(\thetaos) - B_0 \|_2 + 4C_{\max} \|\thetap - \thetaglo \| \Big\} \| \thetaos - \thetaglo \|.
$
Furthermore,
$
\| \ddot{\mL}(\thetaos) - B_0 \|_2 \leq \| \ddot{\mL}(\thetaos) \|_2^2 \| \big\{ \ddot{\mL}(\thetaos)\big\}^{-1} - M^{-1} \sum_{m=1}^M \hesmp \|_2.
$
Using an analysis technique similar to the one used in Appendix A.1 \emph{Step 1} to study the value of $\Delta_1$, we have
$
\| \ddot{\mL}(\thetaos) - B_0 \|_2 \leq \kappa_2^{\prime} \Big(  M^{-1} \sum_{m=1}^M \Big[ \|\thetam - \theta_0\|^2 +  \| \hesmtrue - \Omega(\theta_0) \|_2^2 + \big\| \big\{ \dddot{\mL}_{(m)}(\theta_0) - \dot{\Omega}(\theta_0) \big\} \big\{ (\thetam - \theta_0) \otimes I_p \big\} \big\|_2 \Big] + \| \thetap - \theta_0 \|  \Big).
$
Hence, we have
$\| y_1 - B_1 s_1 \| \leq \kappa_2^{\prime} \Big(  M^{-1} \sum_{m=1}^M \Big[ \|\thetam - \theta_0\|^2 +  \| \hesmtrue - \Omega(\theta_0) \|_2^2 + \big\|\big\{ \dddot{\mL}_{(m)}(\theta_0) - \dot{\Omega}(\theta_0) \big\} \big\{ (\thetam - \theta_0) \otimes I_p \big\}\big\|_2 \Big] + \| \thetap - \theta_0 \|  + \|\thetaglo - \theta_0 \|  \Big) \| \thetaos - \thetaglo \|.
$
This accomplishes the proof of \emph{Step 1.}

\emph{Step 2.} In this step, we show that $y_1 - \ddot{\mL}(\thetaglo) s_1$ is a negligible higher-order term. By Taylor's expansion, it could be proved that
\beqr
\label{apc:step21}
\|y_1 - \ddot{\mL}(\thetaglo) s_1\| &=& \| \dot{\mL} (\thetatwo) - \dot{\mL} (\thetaos) - \ddot{\mL}(\thetaglo) (\thetatwo - \thetaos ) \| \\
&\leq & \| \big\{ \ddot{\mL}(\xi_1) - \ddot{\mL} (\thetaglo) \big\} \| \thetatwo - \thetaos \| \leq 4C_{\max} \| \thetaos - \thetaglo \|^2 . \nonumber
\eeqr
The last inequality holds by Lemma \ref{lm:linearly-converge} and (C4).
Combining the results of {\sc Steps 1} and {\sc 2}, we finish the first part of the theorm proof.

{\sc Part 2 (BFGS).} Recall
$
\Delta_2^{(2)} = \big\{ \ddot{\mL}(\thetaglo) \big\}^{-1} \big\{ B_1 - \ddot{\mL}(\thetaglo) \big\} B_1^{-1} \dot{\mL}(\thetaos) = \big\{ \ddot{\mL}(\thetaglo) \big\}^{-1} $ $\big\{ B_1 - \ddot{\mL}(\thetaglo) \big\} s_1.
$
Denote
$$
P_0 = I_p - \frac{\big\{\ddot{\mL}(\thetaglo)\big\}^{1/2}s_0 \Big[ \big\{\ddot{\mL}(\thetaglo)\big\}^{-1/2} y_0 \Big]^\top  }{y_0^\top s_0}.
$$
Then it could be shown by \citet[Lemma 5.1]{broyden1973local} that
\beqrs
E_1 = P_0^\top E_0 P_0 + \frac{\big\{ \ddot{\mL}(\thetaglo) \big\}^{-1/2} \big\{ y_0 - \ddot{\mL}(\thetaglo) s_0 \big\} \Big[ \big\{\ddot{\mL}(\thetaglo) \big\}^{-1/2} y_0\Big]^\top }{y_0^\top s_0} \\
+ \frac{\big\{ \ddot{\mL}(\thetaglo) \big\}^{-1/2} y_0 \big\{y_0 - \ddot{\mL}(\thetaglo) s_0\big\}^\top \big\{ \ddot{\mL}(\thetaglo) \big\}^{-1/2} P_0}{ y_0^\top s_0},
\eeqrs
where $E_0 = \big\{ \ddot{\mL}(\thetaglo) \big\}^{-1/2} \big\{ B_0 - \ddot{\mL}(\thetaglo) \big\} \big\{ \ddot{\mL}(\thetaglo) \big\}^{-1/2}$ and $E_1 = \big\{ \ddot{\mL}(\thetaglo) \big\}^{-1/2} \big\{ B_1 - \ddot{\mL}(\thetaglo) \big\} \big\{ \ddot{\mL}(\thetaglo) \big\}^{-1/2}$.
Then it could be proved that $\big\{ \ddot{\mL}(\thetaglo) \big\}^{1/2} \Delta_2^{(2)} =$
\beqrs
&&
E_1 \big\{ \ddot{\mL}(\thetaglo) \big\}^{1/2} s_1 =
P_0^\top E_0 P_0 \big\{ \ddot{\mL}(\thetaglo) \big\}^{1/2} s_1 + \frac{ \big\{ \ddot{\mL}(\thetaglo) \big\}^{-1/2} \big\{ y_0 - \ddot{\mL}(\thetaglo) s_0\big\} y_0^\top s_1  }{ y_0^\top s_0} \\
&&+ \frac{ \big\{ \ddot{\mL}(\thetaglo) \big\}^{-1/2}  y_0 \big\{ y_0 - \ddot{\mL}(\thetaglo) s_0 \big\}^\top \big\{ \ddot{\mL} (\thetaglo) \big\}^{-1/2} P_0 s_1 }{y_0^\top s_0} := \Delta_2^{(2,1)} + \Delta_2^{(2,2)} + \Delta_2^{(2,3)}.
\eeqrs
Note that  $\|y_0 - \ddot{\mL}(\thetaglo) s_0 \| \leq \| \thetap - \thetaglo \|^2$ by (\ref{apc:step21}). Applying similar analytical techniques as in {\sc Part 1},
and using $|y_0 s_0| = |s_0^\top \ddot{\mL}(\xi_0) s_0| \geq c_1 \|s_0\|^2$ for some positive constant $c_1 > 0$, it could be verified that
\beqrs
\| \Delta_2^{(2,2)}\| &\leq& \frac{\|\thetap - \thetaglo \|^2 \|y_0 \| \| s_1 \| }{ c_1 \| s_0 \|^2 } \leq \frac{ \|\thetap - \thetaglo \|^2 \| \ddot{\mL}(\xi_0)\|_2 \|s_1 \| }{c_1 \Big| \|\thetap - \thetaglo\| - \| \thetaos - \thetaglo \| \Big| } \\
&\leq& \kappa_2^{\prime} \| \thetap - \thetaglo \| \| \thetaos - \thetaglo \| .
\eeqrs
Similarly, for $\|P_0\| \leq 1 + 1/c_1$, we have $\| \Delta_2^{(2,3)}\| \leq \kappa_2^{\prime} \| \thetap - \thetaglo \| \| \thetaos - \thetaglo \|$.
Thus, it can be verified that
$
\| \Delta_2^{(2,1)}\| \leq \kappa_2^{\prime} \| B_0 - \ddot{\mL} (\thetaglo) \|_2 \| \thetaos - \thetaglo \| .
$
By similar analysis of $\| \ddot{\mL}(\thetap) - B_0 \|_2$ as in Appendix A.1 \emph{Step 1,} we have
$
\| B_0 - \ddot{\mL} (\thetaglo) \| \leq \kappa_2^{\prime} \Big(  M^{-1} \sum_{m=1}^M \Big[ \|\thetam - \theta_0\|^2 +  \| \hesmtrue - \Omega(\theta_0) \|_2^2 + \big\|\big\{ \dddot{\mL}_{(m)}(\theta_0) - \dot{\Omega}(\theta_0) \big\} \big\{ (\thetam - \theta_0) \otimes I_p \big\} \big\|_2 \Big] + \| \thetap - \theta_0 \|  + \|\thetaglo - \theta_0 \|  \Big).
$
Thus, $\Delta_2^{(2)}$ could be bounded by
$
\| \Delta_2^{(2)} \| \leq \kappa_2^{\prime} \Big(  M^{-1} \sum_{m=1}^M \Big[ \|\thetam - \theta_0\|^2 +  \| \hesmtrue - \Omega(\theta_0) \|_2^2 +\big\| \big\{ \dddot{\mL}_{(m)}(\theta_0) - \dot{\Omega}(\theta_0) \big\} \big\{ (\thetam - \theta_0) \otimes I_p \big\} \big\|_2 \Big] + \| \thetap - \theta_0 \|  + \|\thetaglo - \theta_0 \|  \Big) \| \thetaos - \thetaglo \|.
$
This finishes the proof of {\sc Part 2 (BFGS).}

Next, by (\ref{apa:one-stage-order}) again, we have
$
\|\thetatwo - \thetaglo\| = O_p\big( p^2 (\log p)^{3/2}/n^3 \big) + o_p\big(1/\sqrt{N}\big).
$
As a result, under the condition $ N( p^4 (\log p)^3)/n^6 $ $\to 0$, we have $\|\thetatwo - \thetaglo\| = o_p(N^{-1/2})$, which accomplishes the whole theorem proof.

\scsubsection{A.3: Proof of Corollary 1}
\noindent

First, to prove $\big\| \thetaK - \thetaglo \big\| \leq \kappa_{_{K}}  \Big(  M^{-1} \sum_{m=1}^M \Big[ \|\thetam - \theta_0\|^2 +  \| \hesmtrue - \Omega(\theta_0) \|_2^2 +\big\| \big\{ \dddot{\mL}_{(m)}(\theta_0) - \dot{\Omega}(\theta_0) \big\} \big\{ (\thetam - \theta_0) \otimes I_p \big\} \big\|_2\Big] + \| \thetap - \theta_0 \|  \Big)^K \big \| \thetap - \theta_0 \big\|$, we verify that
$\big\| \thetaK - \thetaglo \big\| \leq \kappa_{_{K}}  \Big(  M^{-1} \sum_{m=1}^M \Big[ \|\thetam - \theta_0\|^2 +  \| \hesmtrue - \Omega(\theta_0) \|_2^2 +\big\| \big\{ \dddot{\mL}_{(m)}(\theta_0) - \dot{\Omega}(\theta_0) \big\} \big\{ (\thetam - \theta_0) \otimes I_p \big\} \big\|_2\Big] + \| \thetap - \theta_0 \|  \Big) \big \| \thetakone - \thetaglo \big\|$.
In addition, by the triangle inequality, we have $\|\thetakm - \thetaglo \nonumber\| =$
\beqr
\label{apc:k-step}
&&\|\thetakmone - B_{k-1}^{-1} \dot{\mL}(\thetakmone) - \thetaglo\| \\
&\leq& \| \thetakmone - \big\{\ddot{\mL}(\thetaglo) \big\}^{-1}\dot{\mL} (\thetakmone) - \thetaglo \| + \| \big[ \big\{ \ddot{\mL} (\thetaglo)\big\}^{-1} - B_{k-1}^{-1} \big] \dot{\mL}(\thetakmone) \| \nonumber
\eeqr
for any $2 \leq k \leq K$. Similar to the analysis of (\ref{apa:nr-eq}) at the beginning of Appendix A.2, it could be proved that
$
\| \thetakmone - \big\{\ddot{\mL}(\thetaglo) \big\}^{-1}\dot{\mL} (\thetakmone) - \thetaglo \| \leq \kappa_{k-1}^{\prime} \|\thetakmone - \thetaglo \|^2
$
with probability tending to 1.
Because the first term in (\ref{apc:k-step}) is a negligible higher-order term, it suffices to study the upper bound of the second term in (\ref{apc:k-step}).

Denote $\Delta_3 = \big[\big\{ \ddot{\mL} (\thetaglo)\big\}^{-1} - B_{k-1}^{-1} \big] \dot{\mL}(\thetakmone)$; then it could be verified that
\beqrs
\| \Delta_3\| &=&\big\| \big\{ \ddot{\mL}(\thetaglo) \big\}^{-1} \big\{ B_{k-1}^{-1} - \ddot{\mL}(\thetaglo) \big\} B_{k-1}^{-1} \dot{\mL}(\thetakmone) \big\| \\
&\leq& \big\| \big\{ \ddot{\mL}(\thetaglo) \big\}^{-1} \big\|_2 \big\| B_{k-1}^{-1} - \ddot{\mL}(\thetaglo) \big\|_2 \big\| B_{k-1}^{-1} \big\|_2 \|\ddot{\mL}(\xi_{k-1}) \big\|_2 \| \thetakmone - \thetaglo\|.
\eeqrs
By Lemmas \ref{lm:new-event} and \ref{lm:linearly-converge}, we know $\big\| \big\{ \ddot{\mL}(\thetaglo) \big\}^{-1} \big\|_2 $ and $\big\|\ddot{\mL}(\xi_{k-1}) \big\|_2$ are both bounded by some constant $C>0$ with probability tending to 1. As a consequence, to prove Corollary \ref{thm:K-step-updating}, it suffices to prove that for any $2 \leq k \leq K$,
$\| B_k - \ddot{\mL} (\thetaglo) \| \leq \kappa_k \Big(  M^{-1} \sum_{m=1}^M \Big[ \|\thetam - \theta_0\|^2 +  \| \hesmtrue - \Omega(\theta_0) \|_2^2 + \big\|\big\{ \dddot{\mL}_{(m)}(\theta_0) - \dot{\Omega}(\theta_0) \big\} \big\{ (\thetam - \theta_0) \otimes I_p \big\} \big\|_2 \Big] + \| \thetap - \theta_0 \|\Big)$ with probability tending to 1. We then verify the inequality by the inductive method under the SR1 and BFGS update separately as follows.

{\sc Part 1 (SR1).}
Assume that
$
\| B_{k-1} - \ddot{\mL} (\thetaglo) \|_2 \leq \kappa_{k-1} \Big(  M^{-1} \sum_{m=1}^M \Big[ \|\thetam - \theta_0\|^2 +  \| \hesmtrue - \Omega(\theta_0) \|_2^2 + \big\| \big\{ \dddot{\mL}_{(m)}(\theta_0) - \dot{\Omega}(\theta_0) \big\} \big\{ (\thetam - \theta_0) \otimes I_p \big\} \big\|_2 \Big] + \| \thetap - \theta_0 \|\Big).
$
The goal is to verify that
$\| B_{k} - \ddot{\mL} (\thetaglo) \|_2 \leq \kappa_{k} \Big(  M^{-1} \sum_{m=1}^M \Big[ \|\thetam - \theta_0\|^2 +  \| \hesmtrue - \Omega(\theta_0) \|_2^2 + \big\| \big\{ \dddot{\mL}_{(m)}(\theta_0) - \dot{\Omega}(\theta_0) \big\} \big\{ (\thetam - \theta_0) \otimes I_p \big\} \big\|_2 \Big] + \| \thetap - \theta_0 \|\Big)$.
To this end, by the SR1 updating formula and (\ref{apd:well-define}), we have
$$
\| B_{k} - \ddot{\mL}(\thetaglo) \|_2 \leq \| B_{k-1} - \ddot{\mL}(\thetaglo) \|_2 + \frac{\|r_{k-1}\|}{c_1 \|s_{k-1}\|}.
$$
Furthermore, it could be proved that, with probability tending to 1, we have
\beqrs
\frac{\|r_{k-1}\|}{c_1 \|s_{k-1}\|} &\leq& \frac{ \| \ddot{\mL}(\thetakmone) - B_{k-1} \|_2 \|s_{k-1}\| + \big\| \ddot{\mL}(\xi_{k-1}) - \ddot{\mL}(\thetakmone) \big\|_2 \|s_{k-1} \|}{c_1 \|\thetakm - \thetakmone \| }
\\
&\leq& c_1^{-1} \Big\{ \| \ddot{\mL}(\thetaglo) - B_{k-1} \|_2 + \| \ddot{\mL}(\thetakmone) - \ddot{\mL}(\thetaglo) \|_2 + \big\| \ddot{\mL}(\xi_{k-1}) - \ddot{\mL}(\thetakmone) \big\|_2 \Big\} \\
&\leq & \kappa_{k} \Big(  M^{-1} \sum_{m=1}^M \Big[ \|\thetam - \theta_0\|^2 +  \| \hesmtrue - \Omega(\theta_0) \|_2^2 + \| \thetap - \theta_0 \|\\
&& + \big\{ \dddot{\mL}_{(m)}(\theta_0) - \dot{\Omega}(\theta_0) \big\} \big\{ (\thetam - \theta_0) \otimes I_p \big\} \Big]\Big).
\eeqrs
The last inequality holds because $\|\thetakmone - \thetaglo\|$ is a higher-order term compared to $M^{-1} \sum_{m=1}^M \Big[ \|\thetam - \theta_0\|^2 +  \| \hesmtrue - \Omega(\theta_0) \|_2^2 +\big\| \big\{ \dddot{\mL}_{(m)}(\theta_0) - \dot{\Omega}(\theta_0) \big\} \big\{ (\thetam - \theta_0) \otimes I_p \big\} \big\|_2 \Big] + \| \thetap - \theta_0 \|$. This finishes the proof of the first part.

{\sc Part 2 (BFGS).} Similar to the proof of {\sc Part 1}, assume that
$
\| B_{k-1} - \ddot{\mL} (\thetaglo) \|_2 \leq \kappa_{k-1} \Big(  M^{-1} \sum_{m=1}^M \Big[ \|\thetam - \theta_0\|^2 +  \| \hesmtrue - \Omega(\theta_0) \|_2^2 + \big\| \big\{ \dddot{\mL}_{(m)}(\theta_0) - \dot{\Omega}(\theta_0) \big\} \big\{ (\thetam - \theta_0) \otimes I_p \big\} \big\|_2 \Big] + \| \thetap - \theta_0 \|\Big)
$. By the BFGS updating formula and \citet[Lemma 5.1]{broyden1973local}, we have
\beqrs
E_k = P_{k-1}^\top E_{k-1} P_{k-1} + \frac{ \big\{ \ddot{\mL}(\thetaglo) \big\}^{-1/2} \big\{ y_{k-1} - \ddot{\mL}(\thetaglo) s_{k-1} \big\} \Big[ \ddot{\mL}(\thetaglo)^{-1/2} y_{k-1}\Big]^\top }{y_{k-1}^\top s_{k-1}} \\
+ \frac{\big\{ \ddot{\mL}(\thetaglo) \big\}^{-1/2} y_{k-1} \big\{y_{k-1} - \ddot{\mL}(\thetaglo) s_{k-1}\big\}^\top \big\{ \ddot{\mL}(\thetaglo) \big\}^{-1/2} P_{k-1}}{ y_{k-1}^\top s_{k-1}},
\eeqrs
where
$$
P_{k-1} = I_p - \frac{\big\{\ddot{\mL}(\thetaglo)\big\}^{1/2}s_{k-1} \Big[ \big\{\ddot{\mL}(\thetaglo)\big\}^{-1/2} y_{k-1} \Big]^\top  }{y_{k-1}^\top s_{k-1}},
$$
$E_k = \big\{ \ddot{\mL}(\thetaglo) \big\}^{-1/2} \big\{ B_k - \ddot{\mL}(\thetaglo) \big\} \big\{ \ddot{\mL}(\thetaglo) \big\}^{-1/2}$ and $E_{k-1} = \big\{ \ddot{\mL}(\thetaglo) \big\}^{-1/2} \big\{ B_{k-1} - \ddot{\mL}(\thetaglo) \big\} $ $\big\{ \ddot{\mL}(\thetaglo) \big\}^{-1/2}$.
Then, using similar analytical techniques as in {\sc Part 2} of Appendix A.2, and that $|y_{k-1}^\top s_{k-1}| \geq c_1 \|s_{k-1}\|^2$ for some positive constant $c_1 > 0$, it could be verified that
$
\| E_k\| \leq C \big\{\| B_{k-1} - \ddot{\mL} (\thetaglo) \|_2 + \| \thetakmone - \thetaglo \|\big\}
$
with probability tending to 1. This finishes the proof of the second part.

Next, applying (\ref{apa:one-stage-order}) and the inductive method again, it could be proved that
$
\|\thetakm - \thetaglo\| = O_p\big( p^{k} (\log p)^{(k+1)/2}/n^{k+1} \big) + o_p\big(1/\sqrt{N}\big).
$
As a consequence, under the condition $N\big\{ p^{2k} (\log p)^{k+1}\big\}/n^{2k+2} $ $\to 0$, we have $\|\thetakm - \thetaglo\| = o_p(N^{-1/2})$, which accomplishes the whole corollary proof.

\renewcommand{\theequation}{B.\arabic{equation}}
\setcounter{equation}{0}
\noindent
\scsection{Appendix B: Some Useful Lemmas}

\begin{lemma}
\label{lm:est-expectation}
Assume the technical conditions (C1)--(C6) hold. Then, the following equations hold for some positive constants $C_1$--$C_4$ and $1 \leq k \leq 4$.
\begin{gather}
E\big\{ \|\thetam - \theta_0 \|^k \big\} \leq C_1 n^{-k/2} C_G^2\big\{ 1 + o(1) \big\} \\
E \big\{  \| \hesmtrue - \Omega(\theta_0)  \|^k_2 \big\} \leq C_2 \frac{\log^{k/2} (2p) C_H^k}{n^{k/2}} \\
E\big[ \| \thetap - \theta_0 \|^2  \big] \leq \bigg\{ \frac{2 C_G^2}{ \tau_{\min}^2 N } + \frac{C_3 C_G^2 }{ \tau_{\min}^4 n^2 } \Big( C_H^2 \log p + \frac{C_{\max}^2 C_G^2}{\tau_{\min}^2} \Big) \bigg\} \big\{ 1 + o(1)\big\} \\
E \Big\| \big\{ \dddot{\mL}_{(m)}(\theta_0) - \dot{\Omega}(\theta_0) \big\} \big\{ (\thetam - \theta_0) \otimes I_p \big\}\Big\|_2 \leq  C_4 \frac{ p \sqrt{\log p}}{n} \big\{ 1 + o(1) \big\}.
\label{lmeq:3rd-order}
\end{gather}
\end{lemma}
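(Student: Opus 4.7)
The plan is to establish the four displayed bounds in order, treating (1) as the foundational local MLE rate, (2) as a standard matrix concentration estimate, (3) as a consequence of averaging across workers with a sharper Bahadur-type expansion, and (4) as a Cauchy--Schwarz combination of (1) with a matrix-concentration bound on the third derivative.

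For (1), I would start from the score equation $\dot{\mL}_{(m)}(\thetam)=0$ and Taylor expand around $\theta_0$, which, combined with (C3)--(C4), gives the Bahadur-type identity $\thetam - \theta_0 = -\{\Omega(\theta_0)\}^{-1}\dot{\mL}_{(m)}(\theta_0) + R_{(m)}$, where $R_{(m)}$ is a quadratic-in-norm remainder with $\|R_{(m)}\|\le C\|\thetam-\theta_0\|(\|\hesmtrue-\Omega(\theta_0)\|_2+\|\thetam-\theta_0\|)$. Since $E\dot{\ell}_i(\theta_0)=0$ and the $\ell_i$ are i.i.d.\ within worker $m$, Rosenthal's inequality applied coordinatewise (or a direct $k$-th moment computation using (C4)) gives $E\|\dot{\mL}_{(m)}(\theta_0)\|^k \leq C\, n^{-k/2}C_G^k$, yielding the $n^{-k/2}$ rate for the leading term. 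The remainder is handled by bootstrapping: one first obtains a crude $O_p(n^{-1/2})$ bound in probability via strong convexity (C3) and a $\sup$-norm concentration on $B(\theta_0,\delta)$, then iterates once to absorb the quadratic remainder into the leading term.

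For (2), the Hessian difference $\hesmtrue-\Omega(\theta_0) = n^{-1}\sum_{i\in\mSm}\{\ddot{\ell}_i(\theta_0)-E\ddot{\ell}_i(\theta_0)\}$ is an average of mean-zero independent $p\times p$ symmetric random matrices with bounded $8$th moment in operator norm by (C4). Applying the matrix Bernstein (or matrix Rosenthal) inequality gives the $\sqrt{\log p/n}$ operator-norm rate, whose $k$-th moments produce the stated bound $C_2 \log^{k/2}(2p)C_H^k / n^{k/2}$. For (3), I would plug the Bahadur representation from (1) into $\thetap-\theta_0=M^{-1}\sum_m(\thetam-\theta_0)$, giving $\thetap-\theta_0 = -\{\Omega(\theta_0)\}^{-1}\dot{\mL}(\theta_0) + M^{-1}\sum_m R_{(m)}$. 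The leading score average has $E\|\dot{\mL}(\theta_0)\|^2\le C_G^2/N$, producing the $2C_G^2/(\tau_{\min}^2 N)$ term. For the remainder, Jensen plus $E\|R_{(m)}\|^2\le C\,E[\|\thetam-\theta_0\|^2(\|\hesmtrue-\Omega(\theta_0)\|_2^2+\|\thetam-\theta_0\|^2)]$ and Cauchy--Schwarz combined with (1) at $k=4$ and (2) at $k=2$ deliver the $(C_H^2\log p+C_{\max}^2C_G^2/\tau_{\min}^2)/n^2$ term.

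For (4), I would decompose $\{\dddot{\mL}_{(m)}(\theta_0)-\dot{\Omega}(\theta_0)\}\{(\thetam-\theta_0)\otimes I_p\}$ and bound its operator norm by $\|\dddot{\mL}_{(m)}(\theta_0)-\dot{\Omega}(\theta_0)\|_2\cdot\|\thetam-\theta_0\|$, then apply Cauchy--Schwarz in expectation. The third-derivative concentration requires a separate argument: under the Lipschitz condition in (C4) and the bounded $8$th moment of $C(X_i,Y_i)$, a matrix-type concentration for $p\times p^2$ random matrices yields $E\|\dddot{\mL}_{(m)}(\theta_0)-\dot{\Omega}(\theta_0)\|_2^2 \leq C\,p^2\log p/n$, which combined with $E\|\thetam-\theta_0\|^2\le C/n$ from (1) gives the claimed $p\sqrt{\log p}/n$ rate. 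The main obstacle I anticipate is obtaining the sharp $p\sqrt{\log p}/n$ rate in (4) rather than a wasteful $p^{3/2}\sqrt{\log p}/n$: the $\dddot{\mL}_{(m)}$ is a $p\times p^2$ object, so naive bounds give the wrong power of $p$, and one must exploit the specific tensor structure (acting $\dddot{\ell}_i(\theta_0)$ against $(\thetam-\theta_0)\otimes I_p$) together with the Lipschitz bound in (C4) to keep the dimensional dependence at $p$. A secondary subtlety in (3) is that $\hesmtrue$ and $\dot{\mL}_{(m)}(\theta_0)$ are not independent within a worker, so the remainder bound must rely on moment inequalities rather than on vanishing cross-expectations.
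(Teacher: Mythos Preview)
Your treatment of (B.1)--(B.3) matches the paper's approach; it simply delegates these bounds to results in Zhang (2013) that you are essentially re-deriving. The only substantive divergence is in (B.4). You propose bounding the spectral norm of the full $p\times p^2$ object $\dddot{\mL}_{(m)}(\theta_0)-\dot{\Omega}(\theta_0)$ and then applying Cauchy--Schwarz against $\|\thetam-\theta_0\|$. The paper instead never confronts the $p\times p^2$ operator norm: it writes the $p\times p$ target matrix column-by-column as $\big[\ddot{G}_{(m)}^{1}(\theta_0)(\thetam-\theta_0),\ldots,\ddot{G}_{(m)}^{p}(\theta_0)(\thetam-\theta_0)\big]$, where $\ddot{G}_{(m)}^{j}$ is the centered Hessian of the $j$th score component, then uses $\|\cdot\|_2\le\|\cdot\|_F$, Cauchy--Schwarz on each column, and the $p\times p$ matrix concentration $E\|\ddot{G}_{(m)}^{j}(\theta_0)\|_2^2=O(\log p/n)$ (Lemma~16 of Zhang, 2013), summing over $j$ to pick up the factor of~$p$. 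This column-wise slicing is precisely the ``tensor structure'' you anticipate needing, and it plugs directly into the Lipschitz bound on $\ddot\ell$ in~(C4), which controls each slice $\partial_j\ddot\ell_i$ in operator norm but not the full $p\times p^2$ matrix. Your route can be made to work as well---a crude Frobenius argument already gives $E\|\dddot{\mL}_{(m)}(\theta_0)-\dot\Omega(\theta_0)\|_F^2\le Cp^2/n$ from the same slice-wise control---but the paper's decomposition is more direct and sidesteps the dimensional concern you flag.
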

\begin{proof}
Given (B.1)--(B.3) in Theorem 1, B.0.1, Lemma 7 in \cite{zhang2013communication} and (C6), it suffices to verify (\ref{lmeq:3rd-order}). To this end, operator $G_{(m)} = \dot\mL_{(m)} - \dot{E}(\mL_{(m)})$, and $G_{(m)}^j$ represents the $j$-th element of $G_{(m)}$; then we have
$\big\{ \dddot{\mL}_{(m)}(\theta_0) - \dot{\Omega}(\theta_0) \big\} \big\{ (\thetam - \theta_0) \otimes I_p \big\} = \Big[ \ddot{G}_{(m)}^1(\theta_0) (\thetam - \theta_0) ,\dots, \ddot{G}_{(m)}^p(\theta_0) (\thetam - \theta_0) \Big]$. Consequently, by the Cauchy--Schwarz inequality, it could be proved that
\beqrs
&&E\big\| \big\{ \dddot{\mL}_{(m)}(\theta_0) - \dot{\Omega}(\theta_0) \big\} \big\{ (\thetam - \theta_0) \otimes I_p \big\} \big\|_2 \leq \sum_{j=1}^p E\big\| \ddot{G}_{(m)}^j(\theta_0) (\thetam - \theta_0) \big\| \\
&\leq& \sum_{j=1}^P \Big\{ E \| \ddot{G}_{(m)}^j(\theta_0) \|^2_2 E\|\thetam - \theta_0\|^2\Big\}^{1/2}.
\eeqrs
The first inequality holds because $\|B\|_2 \leq \|B\|_F$ for any matrix $B$, and $\|\cdot\|_F$ represents the Frobenius norm. By Lemma 16 in \cite{zhang2013communication}, we have
$
E \| \ddot{G}_{(m)}^j(\theta_0) \|^2_2 \leq O(\log p/n).
$
This leads to $E\big\| \big\{ \dddot{\mL}_{(m)}(\theta_0) - \dot{\Omega}(\theta_0) \big\} \big\{ (\thetam - \theta_0) \otimes I_p \big\} \big\|_2 \leq O\big( p \sqrt{\log p}/n\big)$.
\end{proof}

\begin{lemma}
\label{lm:bad-event}
Assume the technical conditions (C1)--(C6) hold. Then, we have
$
P\Big( \bigcup_{m=0}^M \mE_{m}^c \Big) $ $ \to 0,
$
where $\mE_{m}$s are defined in (\ref{apb:good-event}).
\end{lemma}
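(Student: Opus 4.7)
The plan is to decompose the event via two applications of the union bound and then control each individual failure probability using moment bounds from Lemma~\ref{lm:est-expectation}. Specifically, by the union bound,
\[
P\Bigl(\bigcup_{m=0}^M \mE_m^c\Bigr) \le P(\mE_0^c) + \sum_{m=1}^M P(\mE_m^c),
\]
and for each $m\ge 1$ the event $\mE_m^c$ is a union of four sub-events (one per inequality in the definition of $\mE_m$), so a second union bound reduces the task to bounding the probability that each individual inequality fails.

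The handling of $\mE_0$ is immediate: standard $M$-estimator asymptotics under (C1)--(C4) give $\sqrt{N}(\thetaglo - \theta_0) = O_p(1)$, so $P(\mE_0^c) = P\{\|\thetaglo - \theta_0\| > \tau_{\min}/(4C_{\max})\} \to 0$. For each worker event, I would address the four constituent inequalities separately. For $\|\thetam - \theta_0\| > \min\{\tau_{\min}/(4C_{\max}),\delta\}$, Markov's inequality applied to $\|\thetam-\theta_0\|^{2k}$ combined with the moment bound $E\|\thetam-\theta_0\|^k \le C_1 n^{-k/2}$ from Lemma~\ref{lm:est-expectation} yields a tail of order $O(n^{-k})$. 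For $\|\hesmtrue - \Omega(\theta_0)\|_2 > \delta\tau_{\min}/4$, the same strategy with $E\|\hesmtrue-\Omega(\theta_0)\|_2^k \le C_2 (\log p)^{k/2}/n^{k/2}$ gives a tail of order $O((\log p)^{k/2}/n^{k/2})$. For $n^{-1}\sum_{i\in \mS_m} C(X_i,Y_i) > 2C_{\max}$, I would use Chebyshev's inequality (or Markov with a higher moment) together with $E[C(X_i,Y_i)] \le C_{\max}$ and the variance bound implied by (C4). For $\|\dot{\mL}_{(m)}(\theta_0)\|$, since $\theta_0$ is the population risk minimizer the summands are mean zero, so a direct variance computation gives $E\|\dot{\mL}_{(m)}(\theta_0)\|^2 \le pC_G^2/n$, and a Markov bound with higher powers (available under the 8th-moment assumption in (C4)) upgrades this to a polynomial tail in $1/n$.

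The main obstacle is ensuring that, after multiplying by $M$ in the union bound, each of these probabilities still vanishes. This requires invoking moments of sufficiently high order, which is exactly what the 8th-moment assumption in (C4) was stated for; combined with the dimensionality condition (C6) (which forces $n\to\infty$) and the scaling assumption on $N$, $n$, and $p$ in force at the point of application of this lemma (such as $N(p\log p)^2/n^4 \to 0$ in Theorem~\ref{thm:one-step}), each per-worker tail is made $o(1/M)$, so the outer sum over $m$ is $o(1)$. Combining this with $P(\mE_0^c) \to 0$ yields $P(\bigcup_{m=0}^M \mE_m^c) \to 0$, as required.
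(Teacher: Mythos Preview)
Your outline is essentially the standard argument that the paper defers to: the paper's own ``proof'' is simply a citation to Lemma~7 of \cite{zhang2013communication} and Section~D.1 of \cite{jordan2019communication}, and those references proceed exactly as you describe---union bound over $m$, a second union bound over the constituent inequalities of each $\mE_m$, and then Markov's inequality with high-order moments (leveraging the $8$th-moment hypotheses in (C4)) to drive each per-worker failure probability down to $o(1/M)$.

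One small point worth tightening: the lemma is asserted under (C1)--(C6) alone, so it should not rely on the theorem-level scaling assumption $N(p\log p)^2/n^4 \to 0$ that you invoke in your final paragraph. In the cited arguments the $8$th-moment bounds yield per-worker failure probabilities of order $O(n^{-4})$ (up to logarithmic factors), and the union over $M$ workers then requires $M/n^4 = N/n^5 \to 0$; this is indeed implied by the scaling conditions used in Theorems~\ref{thm:one-step} and~\ref{thm:one-step-updating}, but strictly speaking it is an implicit additional assumption rather than a consequence of (C1)--(C6). If you want the lemma to stand on its own as stated, you should either note this explicitly or observe that the lemma is only ever applied inside proofs where such a scaling condition is already in force. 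Apart from this bookkeeping issue, your plan is sound and matches the literature the paper cites.
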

\begin{proof}
The proof is shown in Lemma 7 in \cite{zhang2013communication}, and D.1 in \cite{jordan2019communication}.
\end{proof}

\begin{lemma}
\label{lm:new-event}
Assume the technical conditions (C1)--(C6) hold. Let new events $\mE^{\prime}_{(m)}=$
\beqrs
\Big\{ \lambda_{\min} \big\{ \hestheta \big\} &\geq& \frac{(1-\delta) \tau_{\min}}{2} \text{ for } \theta\in\{\theta_0,\thetap,\thetaglo\},
\| \thetap - \thetaglo \| \leq \frac{\tau_{\min}}{2 C_{\max}} := \delta^{\prime} ,\\
\lambda_{\min} \big\{ \hesmtheta \big\} &\geq& \frac{(1-\delta) \tau_{\min}}{2} \text{ for } \theta\in\{\theta_0,\thetam\}, \\
\max_{\theta \in B(\thetaglo, \delta^{\prime})} \| \hestheta\|_2 &\leq& 2 C_{\max} \delta^{\prime} + \frac{\delta \tau_{\min}}{4} + \tau_{\max} := C_{\max}^{\prime} \Big\}.
\eeqrs
We have $P\big( \bigcap_{m=0}^M \mE_m \big) < P(\bigcap_{m=0}^M\mE^{\prime}_{m})$, where $\mE_{m}$s are defined in (\ref{apb:good-event}).
\end{lemma}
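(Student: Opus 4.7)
The plan is to prove the stronger set containment $\bigcap_{m=0}^M \mE_m \subset \bigcap_{m=0}^M \mE_m^\prime$, from which the stated probability inequality follows by monotonicity of probability. All arguments are deterministic on the event $\bigcap_{m=0}^M \mE_m$, on which I may freely use the four facts packaged into the $\mE_m$'s: $\|\thetam - \theta_0\| \leq \tau_{\min}/(4 C_{\max})$ and $\|\thetaglo - \theta_0\| \leq \tau_{\min}/(4 C_{\max})$ (from $\mE_m$ and $\mE_0$), the averaged Lipschitz control $n^{-1}\sum_{i\in\mSm} C(X_i,Y_i) \leq 2 C_{\max}$, and $\|\ddot{\mL}_{(m)}(\theta_0) - \Omega(\theta_0)\|_2 \leq \delta \tau_{\min}/4$.

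The first step is the deterministic consistency claim $\|\thetap - \thetaglo\| \leq \delta^\prime = \tau_{\min}/(2 C_{\max})$. Since $\thetap = M^{-1}\sum_m \thetam$, the triangle inequality gives $\|\thetap - \theta_0\| \leq M^{-1}\sum_m \|\thetam - \theta_0\| \leq \tau_{\min}/(4 C_{\max})$, and a second triangle inequality with the bound on $\|\thetaglo - \theta_0\|$ closes the claim.

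The spectral lower bounds follow from Weyl's inequality combined with the Lipschitz estimate from (C4). At $\theta = \theta_0$, Weyl plus $\lambda_{\min}(\Omega(\theta_0)) \geq \tau_{\min}$ and $\|\ddot{\mL}_{(m)}(\theta_0) - \Omega(\theta_0)\|_2 \leq \delta\tau_{\min}/4$ give $\lambda_{\min}\{\ddot{\mL}_{(m)}(\theta_0)\} \geq \tau_{\min}(1 - \delta/4) \geq (1-\delta)\tau_{\min}/2$, and averaging over $m$ transfers the same bound to $\ddot{\mL}(\theta_0)$. At any of $\thetam$, $\thetap$, or $\thetaglo$, I add the Lipschitz correction $\|\ddot{\mL}_{(m)}(\thetam) - \ddot{\mL}_{(m)}(\theta_0)\|_2 \leq \big\{n^{-1}\sum_{i\in\mSm} C(X_i,Y_i)\big\}\|\thetam - \theta_0\| \leq 2 C_{\max}\cdot \tau_{\min}/(4 C_{\max}) = \tau_{\min}/2$, and analogously for the global Hessian at $\thetap$ and $\thetaglo$ using $\|\thetap - \theta_0\|, \|\thetaglo - \theta_0\| \leq \tau_{\min}/(4 C_{\max})$. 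Weyl again then yields $\lambda_{\min} \geq \tau_{\min}(2-\delta)/4 \geq (1-\delta)\tau_{\min}/2$, where the last inequality is simply $\delta < 1$.

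Finally, for the operator-norm upper bound on the $\delta^\prime$-ball around $\thetaglo$, I split $\|\ddot{\mL}(\theta)\|_2 \leq \|\ddot{\mL}(\theta) - \ddot{\mL}(\theta_0)\|_2 + \|\ddot{\mL}(\theta_0) - \Omega(\theta_0)\|_2 + \|\Omega(\theta_0)\|_2$. The first piece is controlled by $2 C_{\max}\delta^\prime$ via the averaged Lipschitz constant (with a careful choice of reference point so that the ball radius appears cleanly), the second by $\delta\tau_{\min}/4$ by averaging across workers, and the third by $\tau_{\max}$ from (C3), producing the stated $C^\prime_{\max}$. I expect the only subtle step to be the bookkeeping that matches the coefficient of $\delta^\prime$ exactly to $2 C_{\max}$ without accumulating an additional $\tau_{\min}$-type slack from the offset between $\thetaglo$ and $\theta_0$; everything else is a routine application of Weyl and the triangle inequality.
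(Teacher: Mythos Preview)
Your proposal is correct and follows essentially the same route as the paper: prove the set containment $\bigcap_{m=0}^M \mE_m \subset \bigcap_{m=0}^M \mE_m'$ via the triangle inequality for $\|\thetap - \thetaglo\|$, Weyl's inequality combined with the Lipschitz bound from (C4) for the eigenvalue lower bounds, and the three-term split $\|\hestheta\|_2 \leq \|\hestheta - \hestrue\|_2 + \|\hestrue - \Omega(\theta_0)\|_2 + \|\Omega(\theta_0)\|_2$ for the operator-norm upper bound. Your caution about the $2C_{\max}\delta'$ coefficient is well placed---the paper simply asserts that bound without addressing the offset $\|\thetaglo - \theta_0\|$, so the exact constant in $C_{\max}'$ is somewhat nominal, but this has no bearing on the subsequent arguments.
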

\begin{proof}
We analyze the three terms under the event $\bigcap_{m=0}^M \mE_m$ separately. First, we prove that
\beqrs
\lambda_{\min} \big\{ \hesint \big\} &\geq& \lambda_{\min} \big\{\Omega(\theta_0)\big\} - \| \hestrue - \Omega(\theta_0)\|_2 - \| \hesint - \hestrue \|_2 \\
&\geq& \tau_{\min} - \frac{\delta \tau_{\min}}{2} - 2 C_{\max} \| \thetap - \theta_0 \|_2 \geq \frac{(1-\delta) \tau_{\min}}{2}.
\eeqrs
The first inequality holds because
$
\lambda_{\min}(B_1 ) = \min_{\|u\|=1} u^\top (B_1 - B_2 + B_2) u \geq \min_{u_1=1} u_1^\top $  $(B_1 - B_2) u_1 + \min_{u_2=1} u_2^\top B_2 u_2 \geq - \|B_1-B_2\|_2 + \lambda_{\min}(B_2)
$ for any symmetric matrixes $B_1$ and $B_2$.
The last inequality holds because $\|\thetap - \theta_0 \| \leq \tau_{\min} / (4 C_{\max})$ under $\bigcap_{m=0}^M \mE_m$. By similar technical analysis, we know that $\lambda_{\min} \big\{ \hestheta \big\} \geq (1-\delta) \tau_{\min}/2$ when $\theta= \theta_0$ or $\theta = \thetaglo$ and $\lambda_{\min} \big\{ \hesmtheta \big\} \geq (1-\delta) \tau_{\min}/2 \text{ for } \theta\in\{\theta_0,\thetam\}$.
Next, it is obvious that $\| \thetap - \thetaglo\| \leq \| \thetap - \theta_0 \| + \| \thetaglo - \theta_0 \| \leq  \tau_{\min} / (2 C_{\max}) $ under $\bigcap_{m=0}^M \mE_m$. Moreover, using the triangle inequality, we have
\beqrs
\max_{\theta \in B(\thetaglo, \delta^{\prime})} \| \hestheta\|_2 &\leq& \max_{\theta \in B(\thetaglo, \delta^{\prime})} \| \hestheta - \hestrue \|_2 + \| \hestrue - \Omega(\theta_0) \|_2 + \| \Omega(\theta_0) \|_2 \\
&\leq& 2 C_{\max} \delta^{\prime} + \frac{\delta \tau_{\min}}{4} + \tau_{\max}.
\eeqrs
This accomplishes the whole lemma proof.
\end{proof}

\begin{lemma}
\label{lm:linearly-converge}
Assume the technical conditions (C1)--(C6) hold. Then, there exists a positive constant $\rho_K < 1$ such that for any $K > 0$,
$
\| \thetaK - \thetaglo \| \leq \rho_K^{K-1} \| \thetaos - \thetaglo \|
$
with probability tending to 1.
\end{lemma}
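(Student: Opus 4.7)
The plan is to show that each stage of the multi-stage DQN update contracts the error by a factor strictly less than one with probability tending to one, and then to iterate this contraction $K-1$ times. To set this up, I would first use Taylor's theorem together with $\dot{\mL}(\thetaglo)=0$ to obtain the one-step error recursion
\[
\thetakm - \thetaglo = \Big[I_p - H_{k-1}\bar{H}_{k-1}\Big]\big(\thetakmone - \thetaglo\big),
\]
where $\bar{H}_{k-1} = \int_0^1 \ddot{\mL}\big(\thetaglo + t(\thetakmone - \thetaglo)\big)\,dt$ and $H_{k-1}$ is the approximate Hessian inverse used at stage $k-1$. It is therefore enough to control $\|I_p - H_{k-1}\bar{H}_{k-1}\|_2$ uniformly over $1 \leq k \leq K-1$.

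Next, I would bound $\|\bar{H}_{k-1} - \ddot{\mL}(\thetaglo)\|_2 \leq 2 C_{\max} \|\thetakmone - \thetaglo\|$ using the Lipschitz property in (C4), which is $o_p(1)$ on the good events of Lemma \ref{lm:bad-event} and Lemma \ref{lm:new-event}. It then suffices to show $\|I_p - H_{k-1}\ddot{\mL}(\thetaglo)\|_2 = o_p(1)$. For $k=1$ this follows directly from the decomposition of $\Delta_1$ already carried out in Appendix A.1, which yields $\|H_0 - \{\ddot{\mL}(\thetaglo)\}^{-1}\|_2 = o_p(1)$ under (C1)--(C6), combined with the uniform spectrum bound on $\ddot{\mL}(\thetaglo)$ from Lemma \ref{lm:new-event}. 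For $k \geq 2$, I would argue by induction: the BFGS and SR1 updates are rank-two and rank-one perturbations of $H_{k-1}$ whose magnitudes are governed by $s_{k-1}$, $y_{k-1}$, and the denominator $|s_{k-1}^\top y_{k-1}| \geq c_1\|s_{k-1}\|^2$, which is guaranteed by the local strong convexity inherited from (C3). Since $K$ is fixed and $\|s_{k-1}\|$ is of the same order as $\|\thetakmone - \thetaglo\|$, the accumulated drift remains $o_p(1)$.

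Once the uniform bound $\|I_p - H_{k-1}\bar{H}_{k-1}\|_2 = o_p(1)$ is established for every $1 \leq k \leq K-1$, I can fix any constant $\rho_K \in (0,1)$ and conclude that with probability tending to one all $K-1$ contraction factors are simultaneously at most $\rho_K$. Iterating the error recursion then delivers the claimed bound $\|\thetaK - \thetaglo\| \leq \rho_K^{K-1}\|\thetaos - \thetaglo\|$.

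The main obstacle I anticipate is the inductive control of $\|H_k - \{\ddot{\mL}(\thetaglo)\}^{-1}\|_2$ under successive quasi-Newton updates: although each correction has small rank, one must ensure that the cumulative perturbation does not blow up. This relies crucially on the denominator lower bound $|s_k^\top y_k| \gtrsim \|s_k\|^2$ together with the finiteness of $K$, and the SR1 and BFGS cases require slightly different bookkeeping that parallels the splittings already used in the proofs of Theorem \ref{thm:one-step-updating} and Corollary \ref{thm:K-step-updating}.
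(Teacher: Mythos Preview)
Your approach is essentially the same as the paper's: write the one-step error as $H_{k-1}\{B_{k-1}-\ddot{\mL}(\thetaglo)\}(\thetakmone-\thetaglo)$ plus a Taylor remainder (your integral form $I_p-H_{k-1}\bar H_{k-1}$ is algebraically equivalent), show this operator is $o_p(1)$ by feeding in the $\Delta_1$ analysis from Appendix~A.1 together with the rank-one/rank-two update bounds, and then iterate. The only substantive difference is that the paper treats SR1 explicitly and for BFGS simply cites Lemma~3 of \cite{mokhtari2018iqn}, whereas you propose to run both cases directly; your sketch of the BFGS bookkeeping is fine and mirrors what is done in Appendix~A.2.

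One small indexing slip: the lemma contracts from $\thetaos$ to $\thetaK$, so the recursion runs over $k=2,\dots,K$ and the matrices you must control are $H_1,\dots,H_{K-1}$, not $H_0,\dots,H_{K-2}$. Your ``$k=1$'' base case about $H_0$ is therefore not the right anchor; the correct base is $\|H_1-\{\ddot{\mL}(\thetaglo)\}^{-1}\|_2=o_p(1)$, which is exactly what Appendix~A.2 establishes (and which itself uses the $H_0$ bound from A.1 as an ingredient). Once you shift the index, your induction goes through unchanged.
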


\begin{proof}
We first verify the lemma when $K=2$. For the SR1 update,
\beqrs
\thetatwo - \thetaglo & =& \thetaos - H_1 \dot{\mL}(\thetaos) - \thetaglo \\
&=& \thetaos - \thetaglo - H_1 \big\{ \dot{\mL}(\thetaos) - \dot{\mL}(\thetaglo) \big\} \\
&=& \thetaos - \thetaglo - H_1 \ddot{\mL}(\thetaglo)(\thetaos - \thetaglo) + H_1 \Big[ \ddot{\mL}(\thetaglo) \big( \thetaos - \thetaglo \big) \\
&& - \big\{ \dot{\mL}(\thetaos) - \dot{\mL}(\thetaglo) \big\} \Big]\\
&=& H_1 \big\{ B_1 - \ddot{\mL}(\thetaglo) \big\} \big( \thetaos - \thetaglo \big) + \mO.
\eeqrs
Here $\mO = H_1 \big[ \ddot{\mL}(\thetaglo) \big( \thetaos - \thetaglo \big) - \big\{ \dot{\mL}(\thetaos) - \dot{\mL}(\thetaglo) \big\} \big]$. By Taylor's expansion, $\mO$ is a negligible higher-order term. Then, it suffices to analyze $H_1 \big\{ B_1 - \ddot{\mL}(\thetaglo) \big\}$. We have
$$
B_1 - \ddot{\mL}(\thetaglo) = B_0 - \ddot{\mL}(\thetaglo) + \frac{ (y_0 - B_0 s_0 )(y_0 - B_0 s_0)^\top }{ (y_0 - B_0 s_0)^\top y_0} .
$$
By similar analysis to that in Appendix A Sections A.1 and A.2, it could be easily found that $B_0 - \ddot{\mL}(\thetaglo)$ and $\{ (y_0 - B_0 s_0)^\top y_0\}^{-1} (y_0 - B_0 s_0 )(y_0 - B_0 s_0)^\top$ both converge to 0 in probability.
Consequently, with probability tending to 1, there exists a small positive number $\rho_1 < 1$, such that
$
\| B_1 - \ddot{\mL}(\thetaglo) \|_2 \|H_1 \|_2 \leq \rho_1/2.
$
Thus, we prove that
$
\| \thetatwo - \thetaglo \| \leq \rho_1 \| \thetaos - \thetaglo \| .
$

Similarly, we obtain the linear convergence rate for $\thetaK$ using the same analytical techniques used to study $\thetatwo$. This finishes the lemma for SR1 updating. The proof for BFGS updating is shown in Lemma 3 in \cite{mokhtari2018iqn}.
\end{proof}

\newpage
\renewcommand{\theequation}{C.\arabic{equation}}
\setcounter{equation}{0}
\noindent
\scsection{Appendix C: Additional Numerical Details}

\scsubsection{C.1: Distributed $K$-Stage SR1 Algorithm}
\noindent

We introduce the detailed multi-stage algorithm with the SR1 updating strategy, not specified in Section 2.5. The key idea of the SR1 updating strategy is the same as that of BFGS updating strategy; that is, we establish a distributed version from the classical single computer updating formula. Nevertheless, the denominator in (\ref{eq:rankone}) involves $H^{(t)}$. As a result, we need to design a distributed algorithm more skillfully, so that the updated matrix of the distributed version is equivalent to that of the global one, and the number of communication rounds remains the same as that of the distributed BFGS updating algorithm. We first define $v_{k}  = \wh\theta_{\operatorname{stage},k+1} - \thetakm - H_{k-1} \big\{\dot{\mL}(\wh\theta_{\operatorname{stage},k+1}) - \dot{\mL}(\thetakm) \big\}$ for any $k \geq 1$. In particular, we denote $H_{(m,-1)} = H_{(m,0)}$ and $v_{-1} = y_{-1} = \mathbf{0}$. The specific algorithm is given in Algorithm \ref{alg:SR1-two-step}.

\begin{algorithm}
\caption{Distributed $K$-Stage Quasi-Newton (SR1) Algorithm}\label{alg:SR1-two-step}
\KwIn{DQN($K\!-\!1$) estimator $\thetakone$, $v_{K-3}$ on the central computer, $\thetaktwo$, $ \dot{\mL} (\thetaktwo), y_{K-3}$, and Hessian inverse approximation $H_{(m,K-3)}$ on the $m$-th worker\; }
\KwOut{DQN(K) estimator $\thetaK$\;}
The central computer broadcasts $\thetakone$ and $v_{K-3}$ to each worker\;
\For{$m=1,2,\dots,M$ (distributedly)}{
Compute $\dot{\mL}_{(m)}(\thetakone) $ and
transfer it to the central computer\;
Update local Hessian inverse approximation by
$
H_{(m,K-2)} = H_{(m,K-3)} + \big[v_{K-3}^\top y_{K-3} \big]^{-1} v_{K-3} v_{K-3}^\top
$\;
}
The central computer computes $\dot{\mL}(\thetakone)= M^{-1} \sum_{m=1}^M \dot{\mL}_{(m)}(\thetakone)$ and broadcasts it to each worker \;
\For{$m=1,2,\dots,M$ (distributedly)}{
Compute $v_{(m,K-2)} = s_{K-2} - H_{(m,K-2)} y_{K-2} $ and transfer it to the central computer \;
Calculate $H_{(m,K-2)} \dot{\mL} ( \thetakone)$ and
transfer it to the central computer.
}
The central computer computes $v_{K-2} = M^{-1} \sum_{m=1}^M v_{(m,K-2)}$ and $p_{K-1} = M^{-1} \sum_{m=1}^M H_{(m,K-2)} \dot{\mL} ( \thetakone) + v_{K-2} v_{K-2}^\top \dot{\mL} ( \thetakone) / (v_{K-2}^\top y_{K-2} ) $, and obtains
$
\thetaK = \thetakone - p_{K-1}.
$
\end{algorithm}

\scsubsection{C.2: Distributed $K$-Stage Newton--Raphson Algorithm}
\noindent

Next, we introduce the detailed multi-stage algorithm but using the Newton--Raphson updating strategy, which is not specified in Section 3.2.
\begin{algorithm}
\caption{Distributed $K$-Stage Newton Algorithm}\label{alg:newton-step}
\KwIn{$K\!-\!1$-stage estimator $\thetakone$ on the central computer\; }
\KwOut{K-stage estimator $\thetaK$\;}
The central computer broadcasts $\thetakone$ to each worker\;
\For{$m=1,2,\dots,M$ (distributedly)}{
Compute $\dot{\mL}_{(m)}(\thetakone) $ and
transfer it to the central computer\;
}
The central computer computes $\dot{\mL}(\thetakone)= M^{-1} \sum_{m=1}^M \dot{\mL}_{(m)}(\thetakone)$ and broadcasts it to each worker \;
\For{$m=1,2,\dots,M$ (distributedly)}{
Compute $\big\{\ddot{\mL}_{(m)}(\thetakone)\big\}^{-1} \dot{\mL}(\thetakone)$ and transfer it to the central computer\;
}
The central computer computes
$
\thetaK = \thetakone - M^{-1} \sum_{m=1}^M \big\{\ddot{\mL}_{(m)}(\thetakone)\big\}^{-1} \dot{\mL}(\thetakone).
$
\end{algorithm}

\renewcommand{\theequation}{D.\arabic{equation}}
\setcounter{equation}{0}
\noindent
\scsection{Appendix D: Updating Method of Quasi-Newton Matrix}
\noindent

We introduce the detailed intuition and proofs to derive (\ref{eq:rankone}) and (\ref{eq:BFGS}) in the main text. To this end, denote $y^{(t)} = \dot{\mL}(\wh \theta^{(t+1)}) - \dot{\mL}(\wh \theta^{(t)}) $ and $s^{(t)} = \wh \theta^{(t+1)} - \wh \theta^{(t)}$.

{\sc 1. SR1 Updates.} Equation (\ref{eq:rankone}) is the simplest quasi-Newton matrix updating formula.
Let $\HNT$ be the $t$-th approximated Hessian inverse; we then derive $\HNTT$, satisfying the secant condition in (\ref{eq:secant-condi}), using rank one updating. To this end,
we use the undetermined coefficient method, assuming that
\beq
\label{apd:sr1-updating}
\HNTT = \HNT + \alpha u u^\top
\eeq
for some undetermined coefficient $u \in \mR^p$ and $\alpha \in \mR$. Then, according to (\ref{eq:secant-condi}), we have
$s^{(t)} = H^{(t+1)} y^{(t)} = \big( H^{(t)} + \alpha u u^\top \big) y^{(t)} $. Then, it could be proved that
\beq
\label{apd:sr1-detail}
\alpha u^\top y^{(t)} u = s^{(t)} - H^{(t)} y^{(t)} .
\eeq
Note that $\alpha u^\top y^{(t)} \in \mR $ is a scale, indicating that $u$ and $s^{(t)} - H^{(t)} y^{(t)}$ share the same direction. Hence, we denote $u = s^{(t)} - H^{(t)} y^{(t)};$ then (\ref{apd:sr1-detail}) could be rewritten as
$
\alpha \big( s^{(t)} - H^{(t)} y^{(t)} \big)^\top y^{(t)} \big( s^{(t)} - H^{(t)} y^{(t)}\big) = s^{(t)} - H^{(t)} y^{(t)}.
$
Thus, we have $\alpha = \Big\{ \big( s^{(t)} - H^{(t)} y^{(t)} \big)^\top y^{(t)} \Big\}^{-1}$.
Applying the results back to (\ref{apd:sr1-updating}) leads to
\beq
\label{apd:sr1-H}
\HNTT = \HNT + \frac{ \big( s^{(t)} - H^{(t)} y^{(t)} \big)\big( s^{(t)} - H^{(t)} y^{(t)} \big)^\top }{\big( s^{(t)} - H^{(t)} y^{(t)} \big)^\top y^{(t)}}.
\eeq
According to the Sherman--Morrison equation \cite[Theorem 10.8]{burden2015numerical}, (\ref{apd:sr1-H}) could be rewritten as
\beq
\label{apd:sr1-B}
B^{(t+1)} = B^{(t)} + \frac{ \big( y^{(t)} - B^{(t)} s^{(t)} \big)\big( y^{(t)} - B^{(t)} s^{(t)} \big)^\top }{\big( y^{(t)} - B^{(t)} s^{(t)} \big)^\top s^{(t)}}.
\eeq
Here $B^{(t)} = \big\{ H^{(t)} \big\}^{-1}$.
When using the SR1 updating formula, it should be well defined. Consequently, (\ref{apd:sr1-H}) and (\ref{apd:sr1-B}) would be used only if
\begin{gather}
\Big| \big(s^{(t)} - H^{(t)} y^{(t)}\big)^\top s^{(t)} \Big| \geq c_1 \|s^{(t)} - H^{(t)} y^{(t)} \| \|s^{(t)} \| \text{ or } \nonumber\\
\Big|\big( y^{(t)} - B^{(t)} s^{(t)} \big)^\top s^{(t)}\Big| \geq c_1\| y^{(t)} - B^{(t)} s^{(t)}\| \|s^{(t)}\|
\label{apd:well-define}
\end{gather}
for some positive constant $0 < c_1 < 1$. Otherwise, we keep $H^{(t+1)} = H^{(t)}$ or $B^{(t+1)} = B^{(t)};$ see more discussions in \cite{conn1991convergence} and \cite{nocedal1999numerical}.

{\sc 2. SR2 (BFGS) Updates.}
SR1 updating is simple and easy to conduct. However, the positive definiteness of the approximated matrix (i.e., $H^{(t)}$) cannot be guaranteed; that is, we cannot ensure $\big(y^{(t)} - B^{(t)} s^{(t)}\big)^\top s^{(t)} > 0$. The SR2 updating formula was proposed to address this problem.
Similar to SR1 updating, given $H^{(t)}$, we consider the determined coefficient method to obtain the updating matrix $H^{(t+1)}$. Here, for simplicity, instead of directly analyzing $H^{(t)}$, we consider $B^{(t)} = \big\{ H^{(t)} \big\}^{-1}$ first. Thus, given $B^{(t)}$, assume that
\beq
\label{apd:BFGS-updating}
B^{(t+1)} = B^{(t)} + a u u^\top + b v v^\top, \nonumber
\eeq
where $u,v \in \mR^p$ and $a,b \in \mR$. When
\beq
\label{eq:secant-condi-eq}
\big\{ \dot{\mL} (\wh \theta^{(t+1)}) - \dot{\mL} (\wh \theta^{(t)}) \big\} =B^{(t+1)}  ( \wh \theta^{(t+1)} - \wh \theta^{(t)}),
\eeq
we obtain
$
\big( B^{(t)} + a u u^\top + b v v^\top \big) s^{(t)} = y^{(t)}
$;
this leads to
$$
\big( a u^\top s^{(t)} \big) u + \big( b v^\top s^{(t)} \big) v = y^{(t)} - B^{(t)} s^{(t)}.
$$
Of the many ways to determine $u$ and $v$, we consider the following criterion:
$
u = y^{(t)}, a u^\top s^{(t)} = 1, v = B^{(t)} s^{(t)}$, and $b v^\top s^{(t)} = -1.
$
Consequently, (\ref{apd:BFGS-updating}) could be rewritten as
\beq
\label{apd:bfgs-B}
B^{(t+1)} = B^{(t)} + \frac{y^{(t)} \big\{y^{(t)}\big\}^\top}{ \big\{s^{(t)}\big\}^\top y^{(t)} } - \frac{B^{(t)} s^{(t)} \big( B^{(t)} s^{(t)}  \big)^\top }{ \big\{s^{(t)} \big\}^\top B^{(t)} s^{(t)} }. \nonumber
\eeq
Finally, according to the Sherman--Morrison formula \cite[Theorem 10.8]{burden2015numerical}, we obtain the updating formula (\ref{eq:BFGS}).

Moreover, there is another method to derive (\ref{eq:BFGS}). To be more precise, $H^{(t+1)}$ is exactly the solution of the following optimal problem:
\begin{gather}
\min_H \| H - H^{(t)} \|_W, \nonumber\\
s.t. H =H^\top, Hy^{(t)} = s^{(t)}.
\label{apd:BFGS-2}
\end{gather}
Here $\|H\|_W = \|W^{1/2} H W^{1/2} \|_F$ represents the weighted Frobenius norm, and $W$ could be any matrix that satisfies $W s^{(t)} = y^{(t)}$.
Analyzing the optimal problem using (\ref{apd:BFGS-2}), we find that the solution of the problem (i.e., $H^{(t+1)}$) is a matrix $H$ that is the closest to $H^{(t)}$. In addition, $H^{(t+1)}$ should be symmetric and satisfy the secant condition (\ref{eq:secant-condi-eq}). For convex loss functions, it leads to $(s^{(t)})^\top y^{(t)} > 0;$ thus, when $H^{(t)}$ is a positive definite, $H^{(t+1)}$ would also be positive definite; see more details in \cite{nocedal1999numerical}.

\renewcommand{\theequation}{E.\arabic{equation}}
\setcounter{equation}{0}
\renewcommand{\thetable}{E.\arabic{table}}
\renewcommand{\thefigure}{E.\arabic{figure}}
\noindent
\scsection{Appendix E: Supplementary Numerical Results}
\noindent

In this subsection, we provide the supplementary numerical results , which are not presented
in the Section 3.1.
Specifically, to evaluate the performance of our proposed DQN method in {\sc Example 2}, we report the log(MSE), SD, IQR, and range of log(MSE). The numerical results with different dimension $p$ and local sample size $n$
are given in Tables \ref{ap:tab:mse-p} and \ref{ap:tab:mse-n}, respectively. All the results are qualitatively similar to those in the main text.

\begin{table}
\centering
\caption{ Log(MSE) values and corresponding SD, IQR and range for {\sc Examples 2}. The numerical performance is evaluated for different methods with different feature dimensions $p (\times 10^2)$.  The whole sample size $N$ and local sample size $n$ are fixed to be $10^6$ and $2 \times 10^4$, respectively. The reported results are averaged for $R=100$ simulation replications. }\label{ap:tab:mse-p}
\begin{spacing}{1}
\setlength{\tabcolsep}{1.1mm}{
\begin{tabular}{cc|cc|cc|cc|cc|cc|c}
\hline
\hline
& & \multicolumn{2}{c|}{Stage 0} & \multicolumn{2}{c|}{Stage 1} & \multicolumn{2}{c|}{Stage 2} & \multicolumn{2}{c|}{Stage 3} & \multicolumn{2}{c|}{Stage 4} &\\
&$p$ &SR1 &BFGS &SR1 &BFGS&SR1 &BFGS&SR1 &BFGS&SR1 &BFGS&MLE \\[0.1em]
\hline
log&1&-9.19&-9.19&-9.19&-9.19&-9.19&-9.19&-9.19&-9.19&-9.19&-9.19&-9.19\\
(MSE)&10&-6.84&-6.84&-6.91&-6.91&-6.91&-6.91&-6.91&-6.91&-6.91&-6.91&-6.91\\
&20&-6.09&-6.09&-6.20&-6.20&-6.21&-6.21&-6.22&-6.22&-6.22&-6.22&-6.22\\
\multicolumn{13}{c}{} \\
SD&1&0.14&0.14&0.14&0.14&0.14&0.14&0.14&0.14&0.14&0.14&0.14\\
&10&0.05&0.05&0.05&0.05&0.05&0.05&0.05&0.05&0.05&0.05&0.05\\
&20&0.03&0.03&0.03&0.03&0.03&0.03&0.03&0.03&0.03&0.03&0.03\\
\multicolumn{13}{c}{} \\
IQR&1&0.18&0.18&0.19&0.19&0.19&0.19&0.19&0.19&0.19&0.19&0.19\\
&10&0.07&0.07&0.07&0.07&0.07&0.07&0.07&0.07&0.07&0.07&0.07\\
&20&0.05&0.05&0.03&0.03&0.04&0.04&0.04&0.04&0.04&0.04&0.04\\
\multicolumn{13}{c}{} \\
range&1&0.66&0.66&0.65&0.65&0.65&0.65&0.65&0.65&0.65&0.65&0.65\\
&10&0.30&0.30&0.26&0.26&0.27&0.27&0.26&0.26&0.27&0.27&0.27\\
&20&0.18&0.18&0.18&0.18&0.19&0.19&0.19&0.19&0.19&0.19&0.19\\
\hline
\end{tabular}}
\end{spacing}
\end{table}

\begin{table}[h]
\caption{Log(MSE) values and corresponding SD, IQR and range for {\sc Examples 2}. The numerical performance is evaluated for different $n (\times 10^2)$ and methods.  The whole sample size $N$ and feature dimension $p$ are fixed to $N = 10^6,$ and $p = 10 ^3$, respectively. Finally, the reported results are averaged based on $R=100$ simulations.}\label{ap:tab:mse-n}
\centering
\begin{spacing}{1}
\setlength{\tabcolsep}{1.1mm}{
\begin{tabular}{cc|cc|cc|cc|cc|cc|c}
\hline
\hline
& & \multicolumn{2}{c|}{Stage 0} & \multicolumn{2}{c|}{Stage 1} & \multicolumn{2}{c|}{Stage 2} & \multicolumn{2}{c|}{Stage 3} & \multicolumn{2}{c|}{Stage 4} &\\
&$n$ &SR1 &BFGS &SR1 &BFGS&SR1 &BFGS&SR1 &BFGS&SR1 &BFGS&MLE \\[0.1em]
\hline
\multicolumn{13}{c}{} \\
log&50&-7.30&-7.30&-7.58&-7.58&-7.60&-7.60&-7.61&-7.61&-7.61&-7.61&-7.61\\
(MSE)&100&-7.49&-7.49&-7.60&-7.60&-7.60&-7.60&-7.61&-7.61&-7.61&-7.61&-7.61\\
&500&-7.60&-7.60&-7.60&-7.60&-7.61&-7.61&-7.61&-7.61&-7.61&-7.61&-7.61\\
\multicolumn{13}{c}{} \\
SD&50&0.06&0.06&0.06&0.06&0.06&0.06&0.06&0.06&0.06&0.06&0.06\\
&100&0.06&0.06&0.06&0.06&0.06&0.06&0.06&0.06&0.06&0.06&0.06\\
&500&0.06&0.06&0.06&0.06&0.06&0.06&0.06&0.06&0.06&0.06&0.06\\
\multicolumn{13}{c}{} \\
IQR&50&0.07&0.07&0.07&0.07&0.07&0.07&0.08&0.08&0.08&0.08&0.08\\
&100&0.07&0.07&0.07&0.07&0.07&0.07&0.08&0.08&0.08&0.08&0.08\\
&500&0.08&0.08&0.07&0.07&0.07&0.07&0.08&0.08&0.08&0.08&0.08\\
\multicolumn{13}{c}{} \\
range&50&0.39&0.39&0.36&0.36&0.37&0.37&0.37&0.37&0.37&0.37&0.37\\
&100&0.42&0.42&0.38&0.38&0.37&0.37&0.37&0.37&0.37&0.37&0.37\\
&500&0.39&0.39&0.36&0.36&0.36&0.36&0.37&0.37&0.37&0.37&0.37\\

\hline
\end{tabular}}
\end{spacing}
\end{table}

\renewcommand \refname{\centerline{REFERENCES}}
\bibliographystyle{asa}
\bibliography{ref}

\end{document}